\pgfplotsset{
    x tick style={color=black},
    y tick style={color=black}
}
\renewcommand{\labelenumi}{(\alph{enumi})}
\renewcommand\theenumi\labelenumi
\newtheorem{theorem}{Theorem}
\newtheorem*{theorem*}{Theorem}
\newtheorem{lemma}[theorem]{Lemma}
\newtheorem*{lemma*}{Lemma}
\newtheorem{corollary}[theorem]{Corollary}
\newtheorem{definition}[theorem]{Definition}
\newcommand{\oea}{$(1 + 1)$~EA\xspace}
\newcommand{\NSGA}{\mbox{NSGA-II}\xspace}
\newcommand{\NSGAthree}{\mbox{NSGA-III}\xspace}
\newcommand{\SMS}{\mbox{SMS-EMOA}\xspace}
\newcommand{\om}{\textsc{OneMax}\xspace}
\newcommand{\jump}{\textsc{Jump}\xspace}
\newcommand{\omm}{\textsc{OneMinMax}\xspace}
\newcommand{\momm}{$m$\textsc{OneMinMax}\xspace}
\newcommand{\mojzj}{$m$\textsc{OJZJ}\xspace}
\newcommand{\onemax}{\om}
\newcommand{\lo}{\textsc{LeadingOnes}\xspace}
\newcommand{\lotz}{\textsc{LOTZ}\xspace}
\newcommand{\ojzj}{\textsc{OJZJ}\xspace}
\newcommand{\cocz}{\textsc{CountingOnesCountingZeroes}\xspace}
\newcommand{\DLTB}{\textsc{DLTB}\xspace}
\newcommand{\R}{\ensuremath{\mathbb{R}}}
\newcommand{\N}{\ensuremath{\mathbb{N}}} % ohne Null!!!
\DeclareMathOperator{\HV}{HV}
\let\originalleft\left
\let\originalright\right
\renewcommand{\left}{\mathopen{}\mathclose\bgroup\originalleft}
\renewcommand{\right}{\aftergroup\egroup\originalright}
\begin{document}
{\sloppy
\date{}
\title{Runtime Analysis of the \SMS for Many-Objective Optimization}
\author{Weijie Zheng\\
         School of Computer Science and Technology\\
         International Research Institute for Artificial Intelligence\\
       Harbin Institute of Technology\\
        Shenzhen, China
\and Benjamin Doerr\thanks{Corresponding author.}\\ Laboratoire d'Informatique (LIX)\\ \'Ecole Polytechnique, CNRS\\ Institute Polytechnique de Paris\\ Palaiseau, France
%Benjamin Doerr\\ \'Ecole Polytechnique, CNRS,\\ Laboratoire d'Informatique (LIX),\\ Palaiseau, France
}

\maketitle

\begin{abstract}
The widely used multi-objective optimizer NSGA-II was recently proven to have considerable difficulties in many-objective optimization. In contrast, experimental results in the literature show a good performance of the SMS-EMOA, which can be seen as a steady-state NSGA-II that uses the hypervolume contribution instead of the crowding distance as the second selection criterion. 

This paper conducts the first rigorous runtime analysis of the SMS-EMOA for many-objective optimization. To this aim, we first propose a many-objective counterpart of the bi-objective \ojzj benchmark. We prove that SMS-EMOA computes the full Pareto front of this benchmark in an expected number of $O(\mu M n^k)$ iterations, where $n$ denotes the problem size (length of the bit-string representation), $k$ the gap size (a difficulty parameter of the problem), $M=(2n/m-2k+3)^{m/2}$ the size of the Pareto front, and $\mu$ the population size (at least the same size as the largest incomparable set). This result together with the existing negative result for the original NSGA-II shows that, in principle, the general approach of the NSGA-II is suitable for many-objective optimization, but the crowding distance as tie-breaker has deficiencies.

We obtain three additional insights on the SMS-EMOA. Different from a recent result for the bi-objective \ojzj benchmark, a recently proposed stochastic population update often does not help for its many-objective counterpart. It at most results in a speed-up by a factor of order $2^{k} / \mu$, which is $\Theta(1)$ for large $m$, such as $m>k$. On the positive side, we prove that heavy-tailed mutation irrespective of the number $m$ of objectives results in a speed-up of order $k^{0.5+k-\beta}/e^k$.
Finally, we conduct the first runtime analyses of the SMS-EMOA on the classic bi-objective \omm and \lotz benchmarks and show that the SMS-EMOA has a performance comparable to the GSEMO and the NSGA-II.

Our main technical insight, a general condition ensuring that the SMS-EMOA does not lose Pareto-optimal objective values, promises to be useful also in other runtime analyses of this algorithm.
\end{abstract}

\section{Introduction}\label{sec:intro}

The \NSGA \cite{DebPAM02} is the most widely-applied multi-objective evolutionary algorithm (MOEA). 
Non-dominated sorting and crowding distance are its two main building blocks differentiating it from  basic MOEAs such as the SEMO \cite{LaumannsTZ04}, the GSEMO \cite{Giel03} or the $(\mu+1)$ SIBEA~\cite{NguyenSN15}. 
Zheng, Liu, and Doerr \cite{ZhengLD22} conducted the first runtime analysis of the \NSGA (see \cite{ZhengD23aij} for the journal version). 
This work quickly inspired many interesting follow-up results in bi-objective optimization~\cite{BianQ22,DoerrQ23tec,DoerrQ23LB,DoerrQ23crossover,DangOSS23gecco,CerfDHKW23,ZhengD24approx,DangOS24}. 

In contrast to these positive results for two objectives, Zheng and Doerr~\cite{ZhengD24many} proved that for $m \ge 3 $ objectives the \NSGA needs at least exponential time (in expectation and with high probability) to cover the full Pareto front of the $m$-objective \omm benchmark, a simple many-objective version of the basic \onemax problem. 
They stated that the main reason for this low efficiency is the independent computation of the crowding distance in each objective, which suggests that the \NSGA has similar difficulties on other many-objective problems (this was very recently confirmed for the many-objective \lotz problem~\cite{DoerrKK24arxiv}). 

The difficulties of the \NSGA here could be overcome by switching to the \NSGAthree, which provably can efficiently solve several many-objective benchmarks including \omm and \lotz~\cite{WiethegerD23,OprisDNS24}, however, at the price of correctly setting a complex system of parameters, namely the system of reference points the \NSGAthree builds on. 
% Two recent works showed that the \NSGAthree, a successor algorithm of the \NSGA aimed at better coping with many objectives, can efficiently solve the $3$-objective \omm problem~\cite{WiethegerD23,OprisDNS24}. Since apparently practitioners much prefer the \NSGA (more than 17,000 citations on Google scholar only in the last five years) over the \NSGAthree (5,035 citations since its publication in 2013), it remains an interesting question whether there are other variants of the \NSGA which also cope well with many objectives. 

With the \SMS, a second interesting variant of the \NSGA was proposed by Beume, Naujoks, and Emmerich~\cite{BeumeNE07}, and one which has the population size~$\mu$ as single parameter. This algorithm is a steady-state variant of the \NSGA (that is, in each iteration only a single offspring is generated and possibly integrated into the population) that further replaces the crowding distance as secondary selection criterion with the classic hypervolume contribution. Many empirical works (see the more than 2,000 papers citing \cite{BeumeNE07}) confirmed the good performance of the \SMS in many-objective optimization. The first mathematical runtime analysis of the \SMS was conducted very recently by Bian et al.~\cite{BianZLQ23}, who proved that its expected runtime on the bi-objective \ojzj problem is $O(n^{k+1})$. They also proposed a stochastic population update mechanism and proved that it has the often superior runtime of $O(n^{k+1}\min\{1,n/2^{k/4}\})$. 
%Zheng et al.~\cite{ZhengLDD24} proved that the \SMS has an expected runtime of $O(n^4)$ on the bi-objective DLTB problem. %However, whether \SMS can theoretically perform well for many-objective optimization is still unknown. 

\emph{Our Contributions:} This paper conducts the first mathematical runtime analysis of the \SMS for more than two objectives. To this aim, we first define the \mojzj benchmark, an $m$-objective counterpart of the bi-objective \ojzj problem~\cite{ZhengD23ecj}, the problem analyzed in the first runtime analysis for the \SMS \cite{BianZLQ23}. 
%The \mojzj divides the $n$ bit positions into $m/2$ blocks, and on each block the \ojzj with gap size $k$. 
We note that the \mojzj problem is the first multimodal many-objective benchmark proposed for a theoretical analysis, to the best of our knowledge. %Also, it is the many-objective version of the classic \jump benchmark, the dominant benchmark to study how single-objective heuristics cope with local optima.

In our mathematical runtime analysis of the \SMS on this benchmark (Theorem~\ref{thm:sms}), we prove that this algorithm can compute the full Pareto front in an expected number of $O(\mu M n^k)$ iterations, where $n$ is the problem size, $k$ the gap size (a difficulty parameter of the problem that is usually small), $m$ the number of objectives,  $M=(2n/m-2k+3)^{m/2}$ the size of the Pareto front, and where the population size $\mu$ has to be at least as large as the largest set of incomparable solutions (which is not larger than $(2n/m+1)^{m/2}$), an assumption made in most mathematical runtime analyses of MOEAs.

We recall that the original \NSGA needs at least exponential time to optimize the \momm problem, which is a special case of \mojzj with gap size $k=1$. Since the \SMS employs non-dominated sorting, but replaces the crowding distance in the original \NSGA by the hypervolume contribution, our result in a similar fashion as the analysis of the \NSGAthree in \cite{WiethegerD23,OprisDNS24} suggests that the general approach of the \NSGA is suitable for many-objective optimization and that it is only the crowding distance as tie-breaker which is not appropriate for more than two objectives. 

We then analyze whether the better performance of the \SMS on the bi-objective \ojzj problem achieved via a new stochastic population update \cite{BianZLQ23} extends to the $m$-objective \mojzj problem. Unfortunately, we shall observe that only a speed-up of order at most $\max\{1,2^{k} / \mu\}$ is obtained, which is $1$  when $m$ is mildly large,  simply because $\mu \ge M = (2n/m - 2k + 3)^{m/2}$. %, e.g., $m>k$. %This theoretically indicates the limitations of the applicability of such mechanism. 

On the positive side, we show that the advantage of heavy-tailed mutation is preserved. We analyze the \SMS with heavy-tailed mutation on the \mojzj benchmark and prove that a speed-up of order $k^{0.5+k-\beta}/e^k$ is achieved. This is the same speed-up as observed previously for single-objective and bi-objective \jump problems~\cite{DoerrLMN17,ZhengD23ecj,DoerrQ23tec}. We note that this is the first theoretical work to support the usefulness of heavy-tailed mutation in many-objective optimization.

Finally, since so far the performance of the \SMS was only analyzed on the bi-objective \ojzj problem~\cite{BianZLQ23} (and, in a work~\cite{ZhengLDD24} parallel to this, on the not overly prominent DLTB benchmark), we conduct mathematical runtime analyses of the \SMS also on the two most prominent bi-objective benchmarks \omm and \lotz. We prove that the \SMS finds the Pareto fronts of these benchmarks in an expected number of at most $2e(n+1)n(\ln n+1)$ iterations for \omm and at most $2en^2(n+1)$ for \lotz. These are the same asymptotic runtimes (in terms of fitness evaluations) as known for the GSEMO and the \NSGA. %This indicates that \SMS also performs well for bi-objective optimization.

The technical key to all these results are two simple, but general results ensuring that the \SMS (different from the \NSGA in many-objective optimization) cannot forget Pareto-optimal solution values when the population size is sufficiently large (Lemmas~\ref{lem:sur} and~\ref{lem:surH}). We are convinced that these will find applications in future runtime analyses of the \SMS. In fact, Lemma~\ref{lem:sur} has already been used in the subsequent work~\cite{WiethegerD24}.

This paper is organized as follows. Section~\ref{sec:pre} briefly introduces some basic concepts for the multi-objective optimization and the mathematical notations that will be used in this paper. The many-objective \ojzj benchmark is proposed in Section~\ref{sec:mojzj}. Sections~\ref{sec:sms} to~\ref{sec:htm} analyze the runtime of the original \SMS, the \SMS with stochastic population update, and the one with heavy-tailed mutation on this many-objective benchmark. The runtime for two well-known bi-objective functions is conducted in Section~\ref{sec:LOTZ}. Section~\ref{sec:con} concludes this work.

\section{Preliminaries}\label{sec:pre}

In multi-objective optimization, one tries to find good solutions for a problem containing several objectives. This work considers \emph{pseudo-Boolean maximization}, hence our problem is described by a function $f=(f_1,\dots,f_m):\{0,1\}^n \rightarrow \R^m$. Here $m$ is the number of objectives and $n \in \N$ is called the \emph{problem size}. We use the term \emph{bi-objective optimization} when $m=2$, and \emph{many-objective optimization} when $m\ge 3$. The literature is not uniform in from which number of objectives to speak of many objectives. We include already $m=3$ into the many-objective setting since the drastically different behavior of the NSGA-II detected in~\cite{ZhengD24many} starts already at $m=3$.
%As in bi-objective optimization (that is, the case $m=2$),

Different from single-objective optimization, usually not all solutions are comparable in multi-objective problems. We say that a solution $x \in \{0,1\}^n$ \emph{dominates} a solution $y$ (with regard to~$f$), denoted as $x \succ y$, if $f_i(x)\ge f_i(y)$ for all $i \in \{1,\dots,m\}$, and at least one of these inequalities is strict. For a given multi-objective function $f$, we say that $x\in \{0,1\}^n$ is a \emph{Pareto optimum} if and only if there is no $y\in \{0,1\}^n$ that dominates $x$ (with regard to~$f$). The set of all Pareto optima is called \emph{Pareto set}, and the set of all function values of Pareto optima is called \emph{Pareto front}. The goal for an MOEA is to find the full Pareto front as good as possible, that is, to compute a not too large set $S$ of solutions such that $f(S)$ equals or approximates well the Pareto front. As common in the mathematical runtime analysis of MOEAs, we call the \emph{runtime} of an algorithm the number of function evaluations until its population $P$ covers the full Pareto front, that is, $f(P)$ contains the Pareto front. We refer to~\cite{NeumannW10,AugerD11,Jansen13,ZhouYQ19,DoerrN20} for general introductions to the mathematical runtime analysis of evolutionary algorithms, and to~\cite{Brockhoff11bookchapter} for a discussion of runtime analyses of MOEAs.

%\subsection{Mathematical Notations}
\textbf{Mathematical notation.} 
%For easy reading, we list some mathematical notations that will be used in this work. 
For $a,b\in\N$ with $a\le b$, we denote the set of integers in the interval $[a,b]$ by $[a..b] = \{a, a+1,\dots,b\}$. For a point $x\in\{0,1\}^n$, let $x_{[a..b]}:=(x_{a},x_{a+1},\dots,x_b)$ and $|x|_1$ be the number of ones in $x$.

%\change{\textbf{Useful lemma.} In~\cite{BianZLQ23}, they showed that $\binom{n-i}{k-i}/n^{k-i}$ increases with $i$ for $i\in[0..k]$. Here, we give a more general result, which might be useful for future research.
%\begin{lemma}
%Let $n_1,n_2,n_3\in\N$ and $n_1\le n_2\le n_3$. Let $g(i):= \binom{n_2-i}{n_1-i}n_3^i = \binom{n_2-i}{n_2-n_1}n_3^i$ for $i\in[0..n_1]$. Then $g(i)$ increases as $i$ increases.
%\label{lem:calt}
%\end{lemma}
%\begin{proof}
%For $i\in[1..n_1-1]$, we calculate
%\begin{align*}
%\frac{g(i+1)}{g(i)}=\frac{(n_2-(i+1))!n_3^{i+1}}{(n_1-(i+1))!(n_2-n_1)!}\cdot \frac{(n_1-i)!(n_2-n_1)!}{(n_2-i)!n_3^{i}}=\frac{(n_1-i)n_3}{(n_2-i)} \ge 1,
%\end{align*}
%where the last inequality uses $n_1-i\ge 1$ and $n_3\ge n_2$.
%\end{proof}
%}

\section{The $m$-Objective \jump Problem}
\label{sec:mojzj}
As mentioned earlier, the \NSGA was proven to have enormous difficulties in optimizing  many-objective problems~\cite{ZhengD24many}. In that paper, the $m$-objective counterpart \momm of the bi-objective \omm benchmark~\cite{GielL10} was proposed and analyzed. %For an easy comparison, the natural choice is also \momm for theoretical analyses. 
Since the first runtime work of the \SMS so far~\cite{BianZLQ23} analyzed the bi-objective jump problem \ojzj, whose special case with gap size $k=1$ is (essentially) the \omm problem, we shall propose and work with an $m$-objective version of \ojzj. Again, its special case $k=1$ will be (essentially) equal to the \momm problem. With this, our results are comparable both the ones in \cite{ZhengD24many} and \cite{BianZLQ23}.

\subsection{\mojzj}
We first recall the definition of the \momm problem. For the ease of presentation, we only consider even numbers $m$ of objectives here. In the \momm problem, the bit string (of length~$n$) is divided into $m/2$ blocks of equal length $2n/m$. On each of these, a bi-objective \omm problem is defined. We note that this general approach to construct many-objective problems from bi-objective ones goes back to the seminal paper of \cite{LaumannsTZ04}.
%for each block. The formal definition of \momm is shown in the following.

\begin{definition}[\cite{ZhengD24many}]\label{def:momm}
Let $m$ be the even number of objectives. Let the problem size $n$ be a multiple of $m/2$. Let $n'=2n/m \in \N$. For any $x=(x_1,\dots,x_n)$, the $m$-objective function $\momm$ is the function $f:\{0,1\}^n \rightarrow \R^m$ defined by
\begin{align*}
f_i(x)&=
%\begin{cases}
%\sum_{j=1}^{n'} (1-x_{j+(i-1)n'/2}), & \text{if $i$ is odd}\\
%\sum_{j=1}^{n'} x_{j+(i-2)n'/2}, & \text{else}
%\end{cases}
%\end{align*}
%for all $x=(x_1,\dots,x_n)\in\{0,1\}^n$.
\begin{cases}
    \om(\bar{x}_{[\frac{i-1}{2}n'+1..\frac{i+1}{2}n']}), & \text{if $i$ is odd,}\\
    \om({x}_{[\frac{i-2}{2}n'+1..\frac{i}{2}n']}), & \text{else,}
\end{cases}
\end{align*}
where $\bar{x}=(1-x_1,\dots,1-x_n)$ and the function $\om: \{0,1\}^{n'} \rightarrow \R$ is defined by
\begin{align*}
\om(y)=\sum_{i=1}^{n'} y_i
\end{align*}
for any $y\in\{0,1\}^{n'}$.
\end{definition}

We define the $m$-objective \ojzj in a similar manner, that is, we divide the $n$ bit positions into $m/2$ blocks and define a \ojzj problem in each block.
\begin{definition}\label{def:mojzj}
Let $m$ be the even number of objectives. Let the problem size $n$ be a multiple of $m/2$. Let $n'=2n/m \in \N$ and $k\in [1..n']$. For any $x=(x_1,\dots,x_n)$, let the $m$-objective function $\mojzj_{k}$ be the function $f:\{0,1\}^n \rightarrow \R^m$ defined by
\begin{align*}
f_i(x)&=
\begin{cases}
\jump_{n',k}(x_{[\frac{i-1}{2}n'+1..\frac{i+1}{2}n']}), & \text{if $i$ is odd,}\\
\jump_{n',k}(\bar{x}_{[\frac{i-2}{2}n'+1..\frac{i}{2}n']}), & \text{else,}
\end{cases}
\end{align*}
where $\bar{x}=(1-x_1,\dots,1-x_n)$ and the function $\jump_{n',k}: \{0,1\}^{n'} \rightarrow \R$ is defined by
\begin{align*}
\jump_{n',k}(y)&=
\begin{cases}
k+\onemax(y), &\text{if $|y|_1\le n'-k$ or $y=1^{n'}$}\\
n'-\onemax(y), & \text{else}
\end{cases}
\end{align*}
for any $y\in\{0,1\}^{n'}$.
\end{definition}
We note that the function $\jump_{n',k}$ used in the definition above is the famous \jump benchmark. It was first defined by Droste, Jansen, and Wegener~\cite{DrosteJW02} and has quickly become the most employed multimodal benchmark in the theory of randomized search heuristics, leading to many fundamental results on how these algorithms cope with local optima~\cite{JansenW02,DangFKKLOSS18,Doerr21cgajump,BenbakiBD21,FajardoS22,RajabiW22,Witt23,DoerrEJK24}.

Obviously, \mojzj  in the case $k=1$ and \momm are equivalent problems, and \mojzj is the previously defined \ojzj problem when $m=2$.

\subsection{Characteristics}
We now give more details on this \mojzj function. Let $B_i:=[(i-1)n'+1..in']$, $i\in[1..m/2]$, be the $i$-th block of the $n$ bit positions. From Definition~\ref{def:mojzj}, we know that the bit values in block $B_i$ only influence the objectives $f_{2i-1}$ and~$f_{2i}$. Figure~\ref{fig:blockplot} plots the objective values of $f_{2i-1}$ and $f_{2i}$ relative to the number of ones in this block. Obviously, \mojzj is multimodal with respect to the definition of multimodality of multi-objective problems in~\cite{ZhengD23ecj}. 

\begin{figure}[!hb]
\centering
\includegraphics[width=0.65\columnwidth]{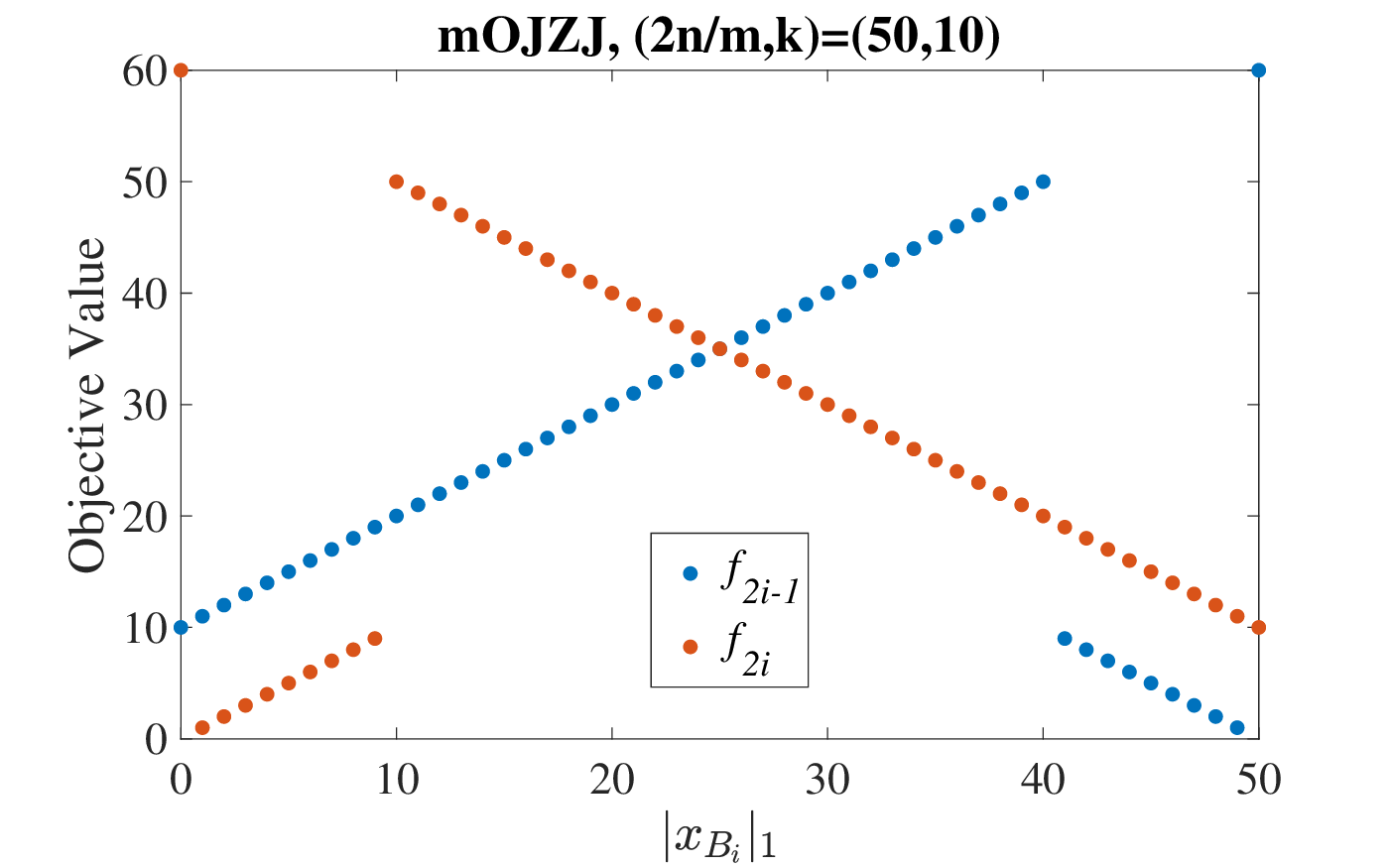} % Reduce the figure size so that it is slightly narrower than the column. Don't use precise values for figure width.This setup will avoid overfull boxes.
\caption{The objective values of $f_{2i-1}$ and $f_{2i}$ in \mojzj w.r.t. $|x_{B_i}|_1$, the number of ones in the block $B_i$.}
\label{fig:blockplot}
\end{figure}

It is not difficult to see that the Pareto set is 
\begin{align*}
S^*=\{x\in\{0,1\}^n \mid \forall i\in[1..m/2], |x_{B_i}|_1 \in [k..n-k] \cup \{0,n\}\}.
\end{align*} 
Hence, the Pareto front is 
\begin{align*}
F^*:=\{&{}(a_1,n'+2k-a_1,\dots,a_{m/2},n'+2k-a_{m/2}) \\
&{}\mid a_1,\dots,a_{m/2}\in[2k..n']\cup\{k,n'+k\}\}
\end{align*}
and the Pareto front size is $M:=|F^*|=(n'-2k+3)^{m/2}$. 

What is harder to determine is the size of a largest set of \emph{incomparable}, that is, pairwise not weakly dominating, individuals. One could be tempted to believe that this number is equal to the size of the Pareto front, but this is not true.\footnote{In fact, in the conference version~\cite{ZhengD24} of this work we made this wrong claim. We sincerely apologize to the research community for this mistake.} Since this number is important in the analysis of MOEAs, for example, it is the best upper bound on the population size of the GSEMO algorithm, we  now discuss this question in more detail.

To this aim, let us define $\overline M$ as the \emph{maximum size of a set of incomparable individuals} (with respect to the \mojzj problem for some parameters $n$, $k$, and $m$). By definition, $\overline M \ge M$, since the Pareto front is witnessed by a set of incomparable solutions. 

The following example shows that $\overline M$ can be considerably larger than $M$. Let $n$ be large, $m = 4$, and $k = n'/2$. Then $F^* = \{(a_1, n'+2k-a_1, a_2, n'+2k-a_2) \mid a_1,a_2 \in \{n'/2, n', 3n'/2\}\}$ and hence $|F^*|=9$. We now construct a set of pairwise incomparable solutions with much larger cardinality. For $i = 1, \dots, k-1$ consider the individual $x^{(i)} = 1^i 0^{n'-i} 1^{n'/2+i} 0^{n'/2-i}$, where we use the word notation known from theoretical computer science to describe bit-strings. In this notation, for example, $1^i$ is a string consisting of $i$ ones, and the concatenation of two strings $u$ and $v$ is denoted by $uv$. We have $f(x^{(i)}) = (k+i,i,n'/2-i,k+n'/2-i)$. Consequently, for $i,j \in [1..k-1]$ with $i \neq j$, the two individuals $x^{(i)}$ and $x^{(j)}$ are incomparable. This shows that $\overline M \ge k-1 = n/4 -1$, which can be considerably larger than $M = 9$. 

Clearly, this example is on the extreme side. Also, noting that all our runtimes will be at least of order $n^k$, such large values of $k$ lead to instance that are too difficult to be solved anyway. For the more typical case that $k$ is small, the following trivial upper bound is relatively tight, as wittnessed by its closeness to the lower bound $M$.

\begin{lemma}
Let $k \le n'/2$ and $P$ be a set of solutions pairwise incomparable solutions %such that any two are not weakly dominated 
(w.r.t. \mojzj). Then $|P| \le \overline M \le (n'+1)^{m/2}$.
\label{lem:nonsize}
\end{lemma}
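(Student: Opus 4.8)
The plan is to bound the number of distinct objective vectors an incomparable set can realize, since each objective vector can be witnessed by at most one member of an incomparable set (two individuals with the same $f$-value weakly dominate each other). So it suffices to show that the image $f(P)$ has size at most $(n'+1)^{m/2}$. I would exploit the block structure of \mojzj: the value of $f$ restricted to block $B_i$ depends only on $x_{B_i}$, and in fact only on $t_i := |x_{B_i}|_1$, so $f(x)$ is determined by the tuple $(t_1,\dots,t_{m/2}) \in [0..n']^{m/2}$. A priori this already gives $|f(P)| \le (n'+1)^{m/2}$, which is exactly the claimed bound — so the argument is genuinely short.

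First I would record the reduction: from Definition~\ref{def:mojzj}, for each block $i$ the pair $(f_{2i-1}(x), f_{2i}(x))$ is a function of $t_i = |x_{B_i}|_1$ alone (one block's contribution to $f_{2i-1}$ is $\jump_{n',k}$ evaluated on $x_{B_i}$, whose \jump value depends only on the number of ones, and $f_{2i}$ uses the complemented block, whose number of ones is $n'-t_i$). Hence the map $x \mapsto (t_1,\dots,t_{m/2})$ factors the map $x \mapsto f(x)$. Second, since each $t_i$ ranges over $\{0,1,\dots,n'\}$, the tuple $(t_1,\dots,t_{m/2})$ takes at most $(n'+1)^{m/2}$ values, so $|f(x) : x \in \{0,1\}^n| \le (n'+1)^{m/2}$, and a fortiori $|f(P)| \le (n'+1)^{m/2}$. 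Third, for an incomparable set $P$ the restriction of $f$ to $P$ is injective (equal $f$-values would be mutually weakly dominating), so $|P| = |f(P)| \le (n'+1)^{m/2}$. Taking the supremum over all incomparable $P$ gives $\overline M \le (n'+1)^{m/2}$, and $|P| \le \overline M$ by definition; the hypothesis $k \le n'/2$ is only needed to ensure the statement is non-vacuous (the Pareto front is non-empty, i.e.\ $[2k..n']$ is non-empty) and plays no role in the counting.

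There is no real obstacle here; the only thing to be careful about is not to over-claim. One is tempted to prove the sharper bound via $f(x)$ determining $t_i$ up to the \jump ambiguity (the two branches of $\jump_{n',k}$ can collide), but that goes the wrong direction and is not needed. The one subtlety worth a sentence is the injectivity step for incomparable sets: if $f(x)=f(y)$ then $x \succeq y$ and $y \succeq x$ in the weak sense, contradicting that $P$ is a set of pairwise \emph{not weakly dominating} individuals (this is the definition used in the paragraph preceding the lemma), hence distinct members of $P$ have distinct images. That, combined with the crude count $(n'+1)^{m/2}$ on the number of possible images, finishes the proof.
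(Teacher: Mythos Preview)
Your proposal is correct and is essentially the same argument as the paper's: an incomparable set cannot contain two solutions with the same objective value, and the objective value is determined by the tuple $(|x_{B_1}|_1,\dots,|x_{B_{m/2}}|_1) \in [0..n']^{m/2}$, giving the bound $(n'+1)^{m/2}$. The paper states this more tersely (and indeed does not even spell out the block-counting step), but the content is identical.
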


\begin{proof}
  Let $P'$ be a set of pairwise incomparable solutions of maximal cardinality.  By definition, $P'$ contains no two solutions with identical objective value. Consequently, $|P| \le \overline M = |P'| \le f(\{0,1\}^n) = (n'+1)^{m/2}$.  
\end{proof}

For the ease of reading of the proofs in the following sections, we call a Pareto optimum $x$ an \emph{inner Pareto optimum} if for all blocks $B_i,i\in[1..m/2]$, we have $x_{B_i}\notin\{0^{n'},1^{n'}\}$. This is equivalent to saying that $|x_{B_i}|_1\in[k..n'-k]$ for all $i\in [1..m/2]$. We call two objective vectors $u=(u_1,n'+2k-u_1,\dots,u_{m/2},n'+2k-u_{m/2})$ and $v=(v_1,n'+2k-v_1,\dots,v_{m/2},n'+2k-v_{m/2})$ \emph{neighbors} if there exists an $i\in [1..m/2]$ such that $|u_i-v_i|=1$ and $u_j= v_j$ for all $j\ne i$.

%\section{Steady-State \NSGA Cannot Optimizes \mojzj Efficiently}

\section{The \SMS Can Optimize \mojzj}
\label{sec:sms}
As discussed before, the original \NSGA with non-dominated sorting and crowding distance needs at least exponential time to cover the full Pareto front of \momm, $m \ge 3$, \cite{ZhengD24many}, which is a special case of \mojzj. In this section, we analyze the runtime of the \SMS on this problem, and more generally, the \mojzj problem for $k\in[1..n'/2]$, and show that it does not encounter such problems.

\subsection{Algorithm Description}
\label{ssec:smsalg}
The \SMS is a steady-state variant (that is, the offspring population size is much smaller than the parent population size) of the \NSGA. Like the \NSGA, the \SMS works with a (parent) population of fixed size~$\mu$. However, in each iteration, only one offspring $x'$ is generated from the  population $P_t$. From the $\mu+1$ individuals in the combined parent and offspring population $R_t=P_t\cup \{x'\}$, a single individual is removed. To this aim, the \SMS like the \NSGA uses non-dominated sorting, that is, it partitions $R_t$ into fronts $F_1,\dots,F_{i^*}$, where $F_i$ contains all non-dominated individuals in $R_t\setminus (\bigcup_{j=1}^{i-1} F_j)$. Different from the original \NSGA, which uses the crowding distance as the secondary criterion for the removal of individuals, the \SMS removes the individual with the smallest hypervolume contribution in the critical front (which here always is the last front $F_{i^*}$). 

The \emph{hypervolume} of a set $S$ of individuals w.r.t.\ a reference point $r$ in the objective space is defined as
\begin{align*}
\HV_r(S)= \mathcal{L}\left(\,\bigcup_{u\in S} \{h\in\R^m \mid r \le h \le f(u)\}\right),
\end{align*}
% \begin{align*}
% \HV_r(S)= \mathcal{L}\left(\bigcup_{u\in S} \{h\in\R^m \mid \forall i\in[1..m], h_i\in [r_i,f_i(u)]\}\right),
% \end{align*}
where $\mathcal{L}$ is the Lebesgue measure. The \emph{hypervolume contribution} of an individual $x\in F_{i^*}$ is calculated via
\begin{align*}
\Delta_r(x,F_{i^*}):=\HV_r(F_{i^*})-\HV_r(F_{i^*}\setminus \{x\}).
\end{align*}
Algorithm~\ref{alg:sms-emoa} gives the pseudocode of the \SMS regarded in this work. We generate the offspring by mutating a single, randomly chosen parent via bit-wise mutation with mutation rate $\frac1n$. We note that all our proofs are robust to adding crossover with constant rate $p<1$, that is, if we could generate the offspring via some crossover operator with probability $p$ and via mutation otherwise, all our runtime guarantees would remain valid except for an additional $\frac{1}{1-p}$ factor. 

\begin{algorithm}[t]
\caption{\SMS}\label{alg:sms-emoa}
\begin{algorithmic}[1]
\STATE Initialize $P_0$ by generating $\mu$ solutions uniformly at random from $\{0,1\}^n$
\FOR {$t=0,1,2,\dots,$} 
\STATE Select a solution $x$ uniformly at random from $P_t$
\STATE Generate $x'$ by flipping each bit of $x$ independently with probability $1/n$
%\STATEx ~\textit{~~\%\%~Original Population Update}
\STATE Use fast-non-dominated-sort()~\cite{DebPAM02} to partition $R_t=P_t\cup \{x'\}$ into $F_1, \dots, F_{i^*}$ \label{stp:fronts}
\STATE Calculate $\Delta_r(z,F_{i^*})$ for all $z\in F_{i^*}$ and find $D=\arg\min_{z\in F_{i^*}}{\Delta_r(z,F_{i^*})}$
\STATE Uniformly at random pick $z'\in D$ and set $P_{t+1}=R_t\setminus \{z'\}$ \label{stp:rm}
\ENDFOR
\end{algorithmic}
\end{algorithm}

\subsection{Runtime of \SMS}
\label{ssec:sms}
We now analyze the runtime of the \SMS, that is, the time until its population covers the full Pareto front of \mojzj. 
We start by proving the following general result ensuring that if the population size is at least the maximum size of a set of incomparable solutions, then individuals survive in the population or are replaced by at least as good individuals. We note that our assumption on the population size is standard in mathematical runtime analyses of MOEAs, this or stronger assumptions were made also in the previous analysis of the SMS-EMOA \cite{BianZLQ23} as well as in essentially all analyses of the \NSGA, \NSGAthree, and SPEA2. We formulate this result in manner more general than what we need since we expect this version to be useful in future analyses of the \SMS (and this happened already in~\cite{WiethegerD24}).
% We first give the following lemma to show that for a large enough population size, the function value of any point with rank one in the combined population will survive to the next population. 

The key to the proof is first noting that by a simple domination argument we can concentrate on individuals in $F_1$. 
%that \change{(a)~any individual in $R_t$ but not in $F_1$ will be dominated by an individual in $F_1$, and (b)~this lemma is satisfied for any individual in $F_1$. 
For these, we argue that (i)~if the combined parent and offspring population contains an individual not in $F_1$, then all solutions in~$F_1$ will survive, and (ii)~if $F_1$ contains two or more solutions with same function value, then only such a solution can be removed from $F_1$. If the population size is large enough, by the pigeon-hole principle, one of these two cases is always satisfied.

\begin{lemma}\label{lem:sur}
Consider any $m$-objective optimization problem. Let $\overline{M} \in \N$ be such that any set $S$ of incomparable solutions  satisfies $|S| \le \overline{M}$. Consider solving this problem via the \SMS with population size $\mu \ge \overline{M}$ and using a reference point $r$ such that $\HV_r(\{x\})>0$ for any individual $x \in \{0,1\}^n$.

Then the following is true. If at some time $t$ the combined parent and offspring population $R_t$ of the \SMS contains some solution $x$ (and thus in particular if $x \in P_t$), then at any later time $s > t$ the parent population $P_s$ contains a solution $y$ such that $y \succeq x$. 

In particular, if $R_t$ contains a Pareto optimum $x$, then all future generations contain a solution $y$ with $f(y) = f(x)$. 
\end{lemma}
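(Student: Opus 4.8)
The plan is to establish the one-iteration version of the statement and then chain it by induction. Concretely, I will show that if a solution $x$ belongs to the combined population $R_t$ at some iteration~$t$, then the parent population $P_{t+1}$ of the next iteration contains a solution $y$ with $y\succeq x$. Since $P_{t+1}\subseteq R_{t+1}$ and weak domination $\succeq$ is transitive, iterating this claim produces, for every $s>t$, a solution in $P_s$ that weakly dominates~$x$, which is exactly the first assertion. The final sentence then follows immediately: if $x$ is a Pareto optimum, then a solution $y$ with $y\succeq x$ must satisfy $f(y)=f(x)$, since $y\succ x$ would contradict Pareto-optimality of~$x$.

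For the one-iteration claim I would first reduce to the case $x\in F_1$, the first front of the non-dominated sorting of $R_t$. Let $x^*$ be a $\succ$-maximal element of the finite, nonempty set $\{u\in R_t : u\succeq x\}$ (it contains $x$); then no member of $R_t$ strictly dominates $x^*$, because such a dominator would, by transitivity, lie in the same set and contradict maximality. Hence $x^*\in F_1$ and $x^*\succeq x$, so it suffices to find $y\in P_{t+1}$ with $y\succeq x^*$. Recall that the \SMS removes exactly one individual $z'$, and $z'$ lies in the last front $F_{i^*}$. If $i^*\ge 2$, then $F_{i^*}$ is disjoint from $F_1$, so $x^*$ survives and we may take $y=x^*$. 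If $i^*=1$, then $R_t=F_1$ is a set of $\mu+1\ge\overline M+1$ pairwise non-dominated solutions; choosing one individual per objective value occurring in $R_t$ yields an incomparable set, so $R_t$ realizes at most $\overline M$ distinct objective values, and the pigeonhole principle gives two individuals of $R_t$ with the same objective value. Two equal-valued individuals have equal boxes, so each has hypervolume contribution $0$; hence $\min_{z\in F_{i^*}}\Delta_r(z,F_{i^*})=0$, and the removed $z'$ satisfies $\Delta_r(z',F_{i^*})=0$.

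The technical core is the geometric fact that, in a set of pairwise non-dominated solutions together with a reference point $r$ lying strictly below every objective vector, an individual can have hypervolume contribution $0$ only if some other individual of the set shares its objective value. I would prove this by a ``corner argument'': for sufficiently small $\eps>0$ the box $\{h\in\R^m : f(z')_i-\eps\le h_i\le f(z')_i \text{ for all } i\}$ has positive Lebesgue measure inside $[r,f(z')]$ (this is where the reference-point condition enters), and any $w\neq z'$ with $f(w)\neq f(z')$ has some coordinate~$i$ with $f(w)_i<f(z')_i$ (otherwise $w\succ z'$, contradicting non-domination), so once $\eps$ is smaller than the finitely many positive coordinate gaps, no such $w$ intersects that corner; thus if $f(z')$ is not duplicated, the corner is covered by $z'$ alone and $\Delta_r(z',F_{i^*})>0$. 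Applying this to our $z'$, which has contribution $0$, yields a solution $w\in F_{i^*}\setminus\{z'\}$ with $f(w)=f(z')$. Finally, if $x^*\neq z'$ we take $y=x^*\in P_{t+1}$; if $x^*=z'$ we take $y=w\in P_{t+1}$, and $f(w)=f(x^*)$ gives $y\succeq x^*$. In all cases $P_{t+1}$ contains a solution weakly dominating $x$, completing the induction. I expect the geometric claim on hypervolume contributions in non-dominated fronts to be the main obstacle (and it is the reason both the hypothesis $\mu\ge\overline M$ and the condition on $r$ are needed); the remainder is routine bookkeeping with non-dominated sorting and a counting argument.
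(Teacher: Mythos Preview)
Your proof is correct and follows essentially the same approach as the paper: reduce by induction to a single step, pass to an element of $F_1$ weakly dominating $x$, and then split on whether $i^*\ge 2$ (equivalently $|F_1|\le\mu$) or $i^*=1$, in the latter case using the pigeonhole principle on the at most $\overline M$ distinct objective values in $F_1$ to find a duplicate and hence an individual of zero hypervolume contribution. Your corner argument makes explicit what the paper only asserts, namely that in a non-dominated front with the stated reference-point condition only duplicated objective values can have zero hypervolume contribution.
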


\begin{proof}
  Let $t \in \N$ and $x \in R_t$. By induction, it suffices to show the lemma for $s = t+1$, that is, we now show that $P_{t+1}$ contains an individual $y$ such that $y \succeq x$. Without loss of generality, we may assume that $x$ is contained in the first front $F_1$ of the non-dominated sorting of $R_t$ -- if not, then there is some $\tilde x \in F_1$ with $\tilde x \succ x$ and showing the existence of a $y \in P_{t+1}$ with $y \succeq \tilde x$ immediately gives the desired $y$ with $y \succeq x$. 
	
	Hence let us assume that $x \in F_1$. If $|F_1| \le \mu$, then all individuals of $F_1$ are taken into the next generation. Consequently $y := x \in P_{t+1}$ is the desired individual. If $|F_1| = \mu+1$, then by our assumption $\mu \ge \overline{M}$ and the fact that $F_1$ is a set of incomparable solutions, we see that $F_1$ contains at least two individuals with identical objective value (pigeon-hole principle). Since such individuals have a hypervolume contribution of zero, and since only such individuals have a hypervolume contribution of zero, we know that the single individual $z'$ removed from $R_t$ is an individual such that there is at least one more individual $z'' \in R_t$ with $f(z') = f(z'')$. If $z' = x$, then $y := z'' \in P_{t+1}$ is the desired individual. If $z' \neq x$, then $x$ survives into $P_{t+1}$ and thus trivially $y := x$ is the desired individual.
\end{proof}

%\merk{old:}
%We first give the following lemma to show that for a large enough population size, the function value of any point with rank one in the combined population will survive to the next population. The key of the proof is that all $F_1$ points will survive if there is a point with a larger rank, and that at least two points have the same function value if all points are in $F_1$ and the population size is large enough.
%\begin{lemma}
%Consider using \SMS with $\mu\ge M$ to optimize the \mojzj problem. For any $x\in F_1$ for the current generation $t$, we have that there is $y$ with $f(y)=f(x)$ in $P_{t+1}$. Specifically, any Pareto front point will be maintained in all future populations once it is reached.
%\label{lem:sur}
%\end{lemma}
%\begin{proof}
%Let $x\in F_1$. If $\cup_{i\ge 2} F_i \ne \emptyset$, then the removal will happen in $\cup_{i\ge 2} F_i$, and $x\in P^{t+1}$.
%
%If all solutions are in $F_1$, from Lemma~\ref{lem:nonsize}, we know that $|f(F_1)|\le M$. Hence, there exist $z_1,z_2\in F_1$ such that $f(z_1)=f(z_2)$. Let $S_x=\{z\in R_t \mid f(x)=f(z)\}$. Then $|S_x|\ge 1$. If $|S_x|= 1$, then $\Delta(x,F_1)>0=\Delta(z_1,F_1)$, and thus $x\in P_{t+1}$. If $|S_x|> 1$, since the survival selection only removes one solution, then there exists $y\in S_x$, such that $y\in P_{t+1}$.
%
%Since any Pareto optimum is always in $F_1$, the latter part of this lemma holds.
%\end{proof}

Recall that an inner Pareto optimum is a Pareto optimum with $x_{B_i}\notin\{0^{n'},1^{n'}\}$ for all blocks $B_i,i\in[1..m/2]$, as defined before. While it is very likely that at least one initial solution is an inner Pareto optimum, this does not happen with probability one, and hence in Lemma~\ref{lem:1stinner} we estimate the time until the population contains an inner Pareto optimum. Since this time usually is much smaller than the time to generate all remaining Pareto optima, we do not care that this estimate could easily be improved.

The key to the proof is to note that changing a block $x_{B_i}$ to a bit-string with between $k$ and $n'-k$ ones is relatively easy. By Lemma~\ref{lem:sur}, an individual with this $f_i$ value will remain in the population. Hence a total of at most $m/2$ such block changes (applied to the right individual) suffice to obtain an inner Pareto optimum. In the proof, we need the following elementary observation, which extends an analogous result from~\cite{BianZLQ23} for the special case that $n'= n$.

\begin{lemma}
Let $k, n', n \in \N$ such that $k \le n'/2$ and $n' \le n$. Then $\binom{n'-i}{k-i}/n^{k-i}$ increases in $i\in[0..k-1]$.
\label{lem:inc}
\end{lemma}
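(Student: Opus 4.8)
The natural approach is to compare consecutive terms of the sequence directly. Fix $i \in [0..k-2]$ and set $a_i := \binom{n'-i}{k-i}/n^{k-i}$; I want to show $a_i \le a_{i+1}$. Forming the ratio $a_{i+1}/a_i$ and cancelling the powers of $n$ gives
\begin{align*}
\frac{a_{i+1}}{a_i} = n \cdot \frac{\binom{n'-i-1}{k-i-1}}{\binom{n'-i}{k-i}}.
\end{align*}
The binomial quotient here is classical: writing $\binom{n'-i}{k-i} = \frac{n'-i}{k-i}\binom{n'-i-1}{k-i-1}$ (valid since $k-i \ge 1$ because $i \le k-1$, so both binomials are well-defined and the lower one is positive), the ratio simplifies to $(k-i)/(n'-i)$, hence $a_{i+1}/a_i = n(k-i)/(n'-i)$.

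So the claim reduces to the inequality $n(k-i) \ge n'-i$ for all $i \in [0..k-1]$ (the endpoint $i = k-1$ being harmless but included for uniformity). Since $i \le k-1$ we have $k-i \ge 1$, and therefore $n(k-i) \ge n \ge n' \ge n'-i$, using $n' \le n$ and $i \ge 0$. This finishes the argument.

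\textbf{Main obstacle.} There is essentially no analytic difficulty; the only thing to be careful about is the bookkeeping of index ranges — namely verifying that $k-i \ge 1$ throughout, so that the binomial identity and the division by $\binom{n'-i-1}{k-i-1}$ are legitimate, and noting that $k \le n'/2$ guarantees $n'-i \ge k-i \ge 1 > 0$ so all the quantities involved are strictly positive and the monotonicity statement is meaningful. I would state the proof in exactly this order: reduce to the ratio, simplify the binomial ratio, then apply $n' \le n$ and $k - i \ge 1$.
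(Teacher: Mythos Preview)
Your proof is correct and more self-contained than the paper's. The paper does not compute the ratio directly; instead it invokes the special case $n'=n$ (proven in~\cite{BianZLQ23}) as a black box and then writes
\[
\frac{\binom{n'-i}{k-i}}{n^{k-i}}=\frac{\binom{n'-i}{k-i}}{(n')^{k-i}}\left(\frac{n'}{n}\right)^{k-i},
\]
observing that both factors are increasing in~$i$ (the first by the cited result applied with $n'$ in place of $n$, the second because $n'/n\le 1$ and the exponent $k-i$ decreases). Your approach avoids the external reference entirely and reduces the claim to the one-line inequality $n(k-i)\ge n'-i$, which is cleaner; the paper's factorisation, on the other hand, makes transparent that the general statement is just the $n'=n$ case multiplied by a trivially monotone correction. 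Either argument is fine, and yours has the advantage of not depending on a prior paper. One minor quibble: in your plan you compare $a_i$ with $a_{i+1}$ for $i\in[0..k-2]$, so in fact $k-i\ge 2$ there; your weaker bound $k-i\ge 1$ is of course still sufficient.
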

\begin{proof}
Recall that in the proof of~\cite[Theorem~1]{BianZLQ23}, it was already proven that $\binom{n-i}{k-i}/n^{k-i}$ increases in~$i$. Hence 
\begin{align*}
\frac{\binom{n'-i}{k-i}}{n^{k-i}}=\frac{\binom{n'-i}{k-i}}{(n')^{k-i}}\left(\frac{n'}{n}\right)^{k-i}
\end{align*}
is the product of two expressions both increasing in $i$, and hence is increasing as well.
\end{proof}

Now we state and prove our result on the time to see first inner Pareto optimum.
\begin{lemma}
Let $k\le n'/2$. Consider using the \SMS with $\mu\ge \overline{M}$ to optimize the \mojzj problem. Then after at most $e\mu (mk/2)^k (1+\ln m)$ iterations in expectation, the population (and also the populations afterwards) contains at least one inner Pareto optimum. 
\label{lem:1stinner}
\end{lemma}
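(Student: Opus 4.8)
The plan is to follow the high-level idea sketched before the lemma: measure progress by the number of \emph{good} blocks of an individual, where block $B_i$ is good for $x$ if $|x_{B_i}|_1\in[k..n'-k]$ (so $x$ is an inner Pareto optimum exactly when all $m/2$ of its blocks are good). Setting $G_t:=\max_{x\in P_t}\#\{\,i\in[1..m/2] : B_i \text{ good for } x\,\}$, I would (i)~show that $(G_t)$ is non-decreasing, (ii)~lower-bound the probability that a single iteration strictly increases $G_t$ while $G_t<m/2$, and (iii)~conclude by a fitness-level/coupon-collector argument, using that once $G_t=m/2$ the population contains an inner Pareto optimum and keeps containing one by~(i).

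Step~(i) is where I expect the only real subtlety. Lemma~\ref{lem:sur} (applicable since $\mu\ge\overline M$) guarantees that every later population still contains an individual weakly dominating any individual present now; but a priori a dominating individual could turn a good block into $0^{n'}$ or $1^{n'}$, which are not good. To rule this out I would first record the elementary fact that, for any single block $B_i$, the sum $f_{2i-1}(x)+f_{2i}(x)$ of the two objectives that $B_i$ influences is at most $n'+2k$, with equality precisely when $B_i$ is good for $x$ or $x_{B_i}\in\{0^{n'},1^{n'}\}$ --- a short case distinction over $|x_{B_i}|_1$ using $k\le n'/2$. Hence if $y\succeq x$ and $B_i$ is good for $x$, then $B_i$ has objective sum exactly $n'+2k$ for $y$ as well, and the cases $y_{B_i}\in\{0^{n'},1^{n'}\}$ are excluded by the coordinatewise inequalities $f_{2i-1}(y)\ge f_{2i-1}(x)$ and $f_{2i}(y)\ge f_{2i}(x)$, since a good block has both of these values strictly between $k$ and $n'+k$. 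So $B_i$ stays good under domination, hence so does every fixed set of good blocks; combined with Lemma~\ref{lem:sur} this gives $G_{t+1}\ge G_t$.

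For step~(ii), suppose $G_t=\ell<m/2$ is witnessed by $y\in P_t$. In iteration $t$ we pick $y$ (probability $1/\mu$) and, for any one of its $m/2-\ell$ non-good blocks --- say with $j$ ones, so $j\in[0..k-1]\cup[n'-k+1..n']$ --- flip, inside that block only, the (at least one, at most $k$) suitable bits bringing it to $k$ or to $n'-k$ ones; the result is an offspring with $\ell+1$ good blocks. By the independence of the bit flips this mutation has probability at least $\frac1e\binom{n'-i}{k-i}/n^{k-i}$ for the pertinent $i\in[0..k-1]$, which by Lemma~\ref{lem:inc} is at least $\frac1e\binom{n'}{k}/n^{k}\ge\frac1e(n'/(kn))^{k}=\frac1{e(mk/2)^{k}}$, using $n'=2n/m$ and $\binom{n'}{k}\ge(n'/k)^{k}$. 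These events are disjoint over the non-good blocks, so $G$ increases in iteration $t$ with probability at least $\frac{m/2-\ell}{\mu}\cdot\frac1{e(mk/2)^{k}}$, the new offspring then forcing $G_{t+1}\ge\ell+1$ via step~(i). Summing geometric waiting times over the levels $\ell=0,1,\dots,m/2-1$ gives, for $T:=\min\{t:G_t=m/2\}$,
\[
E[T]\le\sum_{\ell=0}^{m/2-1}\frac{e\mu(mk/2)^{k}}{m/2-\ell}=e\mu(mk/2)^{k}\sum_{j=1}^{m/2}\frac1j\le e\mu(mk/2)^{k}(1+\ln m),
\]
which is the claim, and the ``afterwards'' part is immediate from $G_{t+1}\ge G_t$. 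The main obstacle is exactly the block-preservation argument of step~(i): it is tempting but wrong to invoke Lemma~\ref{lem:sur} block by block without first checking that domination cannot degrade a good block into an all-zeros or all-ones block; once this is in place, everything else is a routine computation.
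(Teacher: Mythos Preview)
Your proposal is correct and follows essentially the same approach as the paper: the potential $G_t$ is the paper's $L(P)$, your step~(ii) computation is identical (same use of Lemma~\ref{lem:inc} and the bound $\binom{n'}{k}\ge(n'/k)^k$), and the final harmonic sum is the same. The only cosmetic difference is in step~(i), where the paper argues directly that $f_{2i-1}(y)\ge f_{2i-1}(x)\ge 2k$ forces $|y_{B_i}|_1\ge k$ (and symmetrically for $f_{2i}$), whereas you first pass through the sum $f_{2i-1}+f_{2i}$ before invoking the coordinatewise inequalities; both arguments are correct and amount to the same observation.
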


\begin{proof}
Denote by $f = (f_1, \dots, f_m)$ the \mojzj function. This lemma is already proven if the initial population has at least one inner Pareto optimum. Now we consider that the initial population $P$ contains no inner Pareto optimum. 
For any $x\in\{0,1\}^n,$ let $I(x)=\{i \in [1..m/2]\mid |x_{B_i}|_1 \in [k..n'-k]\}$ and $L(x)=|I(x)|$. Let $L(P)=\max\{L(x) \mid x\in P\}$. Then $L(P)\ge 0$ and we consider the first time to reach a population $P$ with $L(P)=m/2$, that is, to reach an inner Pareto optimum. 

We first show that $L(P)$ cannot decrease. Let $S_P=\{x\in P \mid L(x)=L(P)\}$. Let $x\in S_P$. From Lemma~\ref{lem:sur}, we know that for the next population, there is $y$ such that $y \succeq x$. Hence, $f_{2i-1}(y) \ge f_{2i-1}(x)$ for any $i\in[1..m/2]$. If $|x_{B_i}|_1 \ge k$,
%\in [k..n'-k]$, 
then $f_{2i-1}(x)=k+|x_{B_i}|_1$. Thus $f_{2i-1}(y)\ge k+|x_{B_i}|_1 \ge 2k$ and hence $|y_{B_i}|_1 \ge k$ from the definition of \mojzj. An analogous argument based on $f_{2i}$ shows that $|x_{B_i}|_1 \le n'-k$ implies $|y_{B_i}|_1\le n'-k$. Hence $L(y) \ge L(x)$, which shows that $L(P)$ does not decrease.

Now we consider the increase of $L(P)$ when $L(P) < m/2$. Let $x \in P$ with $L(x)=L(P)$. The probability to select $x$ as a parent is  $1/\mu$. For a specific $i\in [1..m/2]\setminus I(x)$, we know that $|x_{B_{i}}|_1\in [0..k-1] \cup [n-k+1..n]$. W.l.o.g., let $|x_{B_{i}}|_1 \in [0..k-1]$. Then the probability of generating an individual $y$ with $|y_{B_{i}}| \in [k..n'-k]$ from $x$ is at least
\begin{align}
\binom{n'-|x_{B_{i}}|_1}{k-|x_{B_{i}}|_1}\left(\frac1n\right)^{k-|x_{B_{i}}|_1}\left(1-\frac{1}{n}\right)^{n-(k-|x_{B_{i}}|_1)}
\ge \binom{n'}{k}\frac{1}{n^{k}}\frac1e \ge \left(\frac{n'}{k}\right)^k\frac{1}{en^k}=\frac{1}{e}\left(\frac{2}{mk}\right)^k,
\label{eq:lbpf}
\end{align}
where the first inequality is from 
Lemma~\ref{lem:inc}, 
%\change{that $\binom{n-i}{k-i}/n^{k-i}$ increases w.r.t. $i$~\cite{BianZLQ23}, that $\binom{x}{a}$ (with a constant $a$) increases w.r.t. $x$, that
%\begin{align*}
%\binom{n'-i}{k-i}&{}={} \frac{(n'-i)!}{(k-i)!(n'-k)!}=\frac{(n-i)!}{(k-i)!(n-k)!}\frac{(n'-i)!(n-k)!}{(n-i)!(n'-k)!}\\
%&{}={}\binom{n-i}{k-i}\frac{(n'-i)!(n-n')!}{(n-i)!}\frac{(n-k)!}{(n'-k)!(n-n')!}\\
%&{}={}\binom{n-i}{k-i} \binom{n-i}{n'-i}^{-1}\frac{(n-k)!}{(n'-k)!(n-n')!}\\
%&{}={}\binom{n-i}{k-i} \binom{n-i}{n-n'}^{-1}\frac{(n-k)!}{(n'-k)!(n-n')!},
%\end{align*}
%and that}
and the last equality uses $n'=2n/m$.
Note that $|[1..m/2]\setminus I(x)|=m/2-L(P)$. 
%The probability of generating an individual $y$ with $|y_{B_{i'}}| \in [k..n'-k]$ for a specific $i'\in [1..m/2]\setminus I(x)$ from $x$ is at least $\frac1e(\frac{1}{mk})^k$ \merk{add details}. 
Hence, the probability of increasing $L(P)$ in one iteration is at least
$$
\frac{m/2-L(P)}{\mu} \frac1e\left(\frac{2}{mk}\right)^k,
$$
and thus the expected number of iterations to witness such an improvement is at most $e\mu(mk/2)^k/(m/2-L(P))$. Hence, the expected number of iterations to generate an inner Pareto optimum is at most
$$
\sum_{i=0}^{m/2-1}\frac{e\mu}{m/2-i}\left(\frac{mk}{2}\right)^k\le e\mu \left(\frac{mk}{2}\right)^k (1+\ln m),
$$
where the last inequality uses $\sum_{j-1}^{m/2} \frac1j \le 1+\ln (m/2) \le 1+\ln m$.
\end{proof}

In the next lemma, we consider the stage of covering all inner Pareto front points once at least one such point is in the population. The key to the proof is that as long as we have not yet discovered the full inner Pareto front, there always exists a missing Pareto front point that is a neighbor of a point that is already covered by the population. Hence choosing the right parent and flipping the right single bit suffices to cover the desired point on the Pareto front.
\begin{lemma}
Let $k\le n'/2$. Consider using the \SMS with $\mu\ge \overline{M}$ to optimize the \mojzj problem. Assume that the current population contains at least one inner Pareto optimum. Then after at most $en\mu (n'-2k+1)^{m/2}$ iterations in expectation, all inner Pareto front points are covered.
\label{lem:allinner}
\end{lemma}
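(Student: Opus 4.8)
The plan is to track, over time, the set of inner Pareto front points covered by the current population and to show that this set never shrinks and grows at a reasonable rate until it equals the full inner Pareto front. Formally, for a population $P$ let $C(P) \subseteq F^*$ be the set of inner Pareto front vectors $v$ such that some $x \in P$ has $f(x) = v$. First I would invoke Lemma~\ref{lem:sur}: since $\mu \ge \overline{M}$, once a Pareto optimum is in $R_t$, a solution with the same objective value stays in all future populations. In particular $C(P_t)$ is non-decreasing in $t$, and by hypothesis $|C(P_0)| \ge 1$ already (some inner Pareto optimum is present). It remains to bound the time until $|C(P_t)|$ reaches the total number of inner Pareto front points, which is $(n'-2k+1)^{m/2}$ (each block contributes $a_i \in [2k..n']$, i.e.\ $n'-2k+1$ choices).

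The second step is the connectivity/neighbor argument. As long as $C(P_t) \neq \{\text{all inner points}\}$, I claim there is an inner Pareto front point $v \notin C(P_t)$ that is a neighbor (in the sense defined just before Section~\ref{sec:sms}) of some $u \in C(P_t)$. This follows because the set of inner front vectors, identified with the box $[2k..n']^{m/2}$, is connected under unit steps in a single coordinate: take any covered $u$ and any uncovered $v$, walk along a coordinate-by-coordinate monotone lattice path from $u$ to $v$; the first step leaving $C(P_t)$ exposes such a neighboring pair. Given such $u$ and $v$ differing only in block $i$ with $|u_i - v_i| = 1$: there is an individual $x \in P_t$ with $f(x) = u$, in particular $x$ is an inner Pareto optimum, so $|x_{B_i}|_1 = u_i \in [2k..n']$ and changing $x_{B_i}$ by one bit (flipping a $0$ to a $1$, or a $1$ to a $0$, in block $B_i$, whichever direction matches $v_i - u_i$) and leaving all other bits unchanged produces an individual $x'$ with $f(x') = v$, and $x'$ is again an inner Pareto optimum since all its block-counts stay in $[k..n'-k]$ (indeed in $[2k-1..n']$, still inner as $2k-1 \ge k$). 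Crucially, such a mutation just flips one specific bit: it happens with probability at least $(1/n)(1-1/n)^{n-1} \ge 1/(en)$, and the right parent $x$ is selected with probability at least $1/\mu$.

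The third step assembles the drift/waiting-time bound. Whenever $|C(P_t)| < (n'-2k+1)^{m/2}$, the probability that iteration $t$ adds a new inner front point is at least $\frac{1}{\mu}\cdot\frac{1}{en} = \frac{1}{e n \mu}$ (it suffices to pick the one good parent and flip the one good bit; there may be additional ways, but this lower bound is enough). Hence the expected time to increase $|C(P_t)|$ by one is at most $en\mu$, and since we need at most $(n'-2k+1)^{m/2}$ such increases (actually one fewer, as we start with $|C(P_0)| \ge 1$), the total expected time is at most $en\mu (n'-2k+1)^{m/2}$, as claimed. I would write this as a sum $\sum$ over the successive increments, each bounded by $en\mu$, and note that the non-decrease of $C$ from Lemma~\ref{lem:sur} is what makes the waiting times compose additively via linearity of expectation.

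The main obstacle is the neighbor existence claim in step two, i.e.\ making precise that the inner Pareto front, viewed as a grid $[2k..n']^{m/2}$, is such that any proper covered subset has a boundary: concretely, that from any covered point one can reach any uncovered point by a path of unit coordinate steps so that the first step out of the covered set lands on a neighbor. This is intuitively obvious but needs a clean statement; the rest is routine once one checks that a single-bit flip in one block realizes the move to a neighbor while keeping the individual an inner Pareto optimum. One should also double-check the edge cases $u_i = 2k$ (cannot decrease, but then a neighbor has $u_i+1$, fine) and $u_i = n'$ (cannot increase, neighbor has $u_i - 1 \ge 2k-1 \ge k$, still inner), so that the produced $x'$ genuinely has all block-counts in $[k..n'-k]$ and thus $f(x')$ is the intended inner front vector $v$.
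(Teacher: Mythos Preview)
Your proposal is correct and follows essentially the same approach as the paper: use Lemma~\ref{lem:sur} to guarantee that covered inner Pareto front points are never lost, exploit the grid connectivity of $[2k..n']^{m/2}$ to find an uncovered inner point adjacent to a covered one, and bound the per-step success probability by $\tfrac{1}{en\mu}$ to get the claimed $en\mu(n'-2k+1)^{m/2}$ total. One small slip to fix: the block count is $|x_{B_i}|_1 = u_i - k \in [k..n'-k]$, not $|x_{B_i}|_1 = u_i \in [2k..n']$ (you have conflated the objective value $f_{2i-1}(x)=k+|x_{B_i}|_1$ with the block count), but this does not affect the argument since the target $v$ is itself inner and hence the new block count $v_i-k$ automatically lies in $[k..n'-k]$.
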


\begin{proof}
Let $u\in F^*$ be an uncovered inner Pareto front point that has some neighbor covered by the population $P$. Since such a neighbor is also a Pareto front point, we know that it is maintained in all future generations. The probability to generate an offspring with function value $u$ is at least $\left(1-\frac1n\right)^{n-1}\frac{1}{n\mu}\ge \frac{1}{en\mu}$. Hence the expected number of iterations to witness such an offspring is at most $en\mu$. From Lemma~\ref{lem:sur}, we know that once such an offspring is generated, $u$~will be covered in all future populations. Noting that there are at most $(n'-2k+1)^{m/2}$ inner Pareto front points, we know that after an expected number of at most $en\mu (n'-2k+1)^{m/2}$ iterations, all inner Pareto front points are covered.
\end{proof}

The last stage is to cover the remaining Pareto front points. The following lemma bounds the runtime of this phase. The key of the proof is that we can divide the Pareto front into several levels and any individual in the 
%current 
$i$-th level can be 
%easily 
generated from a point in the $(i-1)$-th level by flipping the right $k$ bits.
\begin{lemma}
Let $k\le n'/2$. Consider using the \SMS with $\mu\ge \overline{M}$ to optimize the \mojzj problem. Assume that the current population covers all inner Pareto front points. Then after at most $e((n'-2k+3)^{m/2}-(n'-2k+1)^{m/2})\mu n^k$ iterations in expectation, the full Pareto front is covered.
\label{lem:full}
\end{lemma}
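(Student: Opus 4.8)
The plan is to partition the Pareto front $F^*$ into levels according to how many blocks are in the "outer" part of the front (i.e.\ have a block value in $\{k, n'+k\}$ rather than in $[2k..n']$), and to argue that every non-inner Pareto front point can be reached from a point one level closer to the inner part by flipping exactly $k$ bits in a single block. For a Pareto front point $u = (a_1, n'+2k-a_1, \dots, a_{m/2}, n'+2k-a_{m/2})$ with each $a_j \in [2k..n'] \cup \{k, n'+k\}$, define $\ell(u)$ to be the number of indices $j$ with $a_j \in \{k, n'+k\}$; so $\ell(u) = 0$ means $u$ is an inner Pareto front point, and $\ell(u) \le m/2$. The first step is to check the combinatorial claim: for any $u$ with $\ell(u) = \ell \ge 1$, there is a point $v \in F^*$ with $\ell(v) = \ell - 1$ such that $v$ and $u$ differ in exactly one block $B_i$, where $v$'s block value is $a_i' \in \{2k, n'\}$ (the extreme inner values) and $u$'s is $a_i \in \{k, n'+k\}$. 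Concretely, if $a_i = k$ (so $f_{2i-1}(u_{B_i})$ sits at the left jump and $|u_{B_i}|_1 = 0$), we want to reach $|u_{B_i}|_1 = 0$ from a block with $|v_{B_i}|_1 = k$ by flipping the $k$ ones to zeros; symmetrically for $a_i = n'+k$ we flip $k$ zeros to ones in a block that currently has $n'-k$ ones. Either way this is a single mutation flipping exactly $k$ prescribed bits.

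Next, I would run a standard covering argument across the levels. By Lemma~\ref{lem:sur} (with $\mu \ge \overline M$), once the population covers a Pareto front point it covers it forever; by hypothesis all level-$0$ points are covered at the start of this phase. Process the levels in increasing order $\ell = 1, 2, \dots, m/2$. As long as some point $u$ with $\ell(u) = \ell$ is uncovered, pick one whose level-$(\ell-1)$ neighbor $v$ (from the combinatorial step) is already covered — such a $v$ is covered because all of level $\ell-1$ has been finished before we move to level $\ell$. An individual realizing $f$-value $v$ is in the population; select it as parent (probability $1/\mu$), flip exactly the right $k$ bits in the one differing block and no others (probability $\binom{?}{?}$-free lower bound $(1/n)^k (1-1/n)^{n-k} \ge 1/(en^k)$), and we generate an offspring with $f$-value $u$. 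So the probability of covering a specific uncovered level-$\ell$ point in one iteration is at least $\frac{1}{e\mu n^k}$, giving an expected wait of at most $e\mu n^k$ for each such point. Summing over all Pareto front points with $\ell \ge 1$, of which there are $M - (n'-2k+1)^{m/2} = (n'-2k+3)^{m/2} - (n'-2k+1)^{m/2}$, yields the claimed bound $e\bigl((n'-2k+3)^{m/2} - (n'-2k+1)^{m/2}\bigr)\mu n^k$.

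The main obstacle is the combinatorial step: verifying that every level-$\ell$ point genuinely has a level-$(\ell-1)$ neighbor differing in exactly one block by a clean $k$-bit flip, and in particular that the intermediate block configuration $|v_{B_i}|_1 \in \{k, n'-k\}$ is itself a legal Pareto-optimal block value (it is, since $k \le n'/2$ implies $k \le 2k$... wait, rather since $2k \le n'$ we have $\{k,\dots\}$ — one must check $k \le n'-k$, i.e.\ $k \le n'/2$, which is exactly the hypothesis, and that $2k, n' \in [2k..n']$, which is immediate). One also has to make sure the $k$ flipped bits lie entirely within $B_i$ so that no other objective is disturbed and the offspring's $f$-value is exactly $u$; this is automatic from the block structure. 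A minor point is that the parent $v$ need not literally be the representative we named — any population member with $f$-value $v$ works, since the $f$-value determines $|v_{B_i}|_1$ in each block and hence which bits to flip (up to a choice that still succeeds with the stated probability). Everything else is the routine summation above.
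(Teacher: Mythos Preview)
Your proposal is correct and follows essentially the same approach as the paper: the paper likewise partitions the Pareto front into levels $L_0,\dots,L_{m/2}$ according to the number of extreme blocks, argues that each level-$j$ point can be obtained from a level-$(j-1)$ point by flipping $k$ specified bits (probability at least $1/(e\mu n^k)$), invokes Lemma~\ref{lem:sur} for survival, and sums the expected waiting times over the $(n'-2k+3)^{m/2}-(n'-2k+1)^{m/2}$ non-inner points. Your extra care about the combinatorial step and about any representative of $f$-value $v$ sufficing is fine and consistent with the paper's (terser) argument.
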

\begin{proof}
We say that a block $x_{B_i}$ of an individual $x$ is \emph{extreme} if $x_{B_i} \in \{(0,\dots,0),(1,\dots,1)\}$. We divide the Pareto front points into $m/2+1$ levels $L_0,L_1,\dots,L_{m/2}$, where $L_j$ consists of the objective values of all Pareto optima having exactly $j$ extreme blocks. Obviously, all inner Pareto front points belong to the $0$-th level. Note that any point in $j$-th level ($j\ge 1$) can be generated from an individual having a certain objective value in $L_{j-1}$ via flipping specific $k$ bits, leaving the other bits unchanged. Hence, the probability to generate any point in $j$-th level condition on that all $(j-1)$-th level points are covered is at least
\begin{align*}
\frac{1}{\mu}\left(1-\frac1n\right)^{n-k}\left(\frac 1n\right)^k \ge \frac{1}{e\mu n^k}.
\end{align*}
From Lemma~\ref{lem:sur}, we know that such a Pareto front point is reached in at most $e\mu n^k$ iterations in expectation, and will survive in all future populations. Consider the process of covering all non-inner Pareto front points in the order of increasing level. Since the $0$-th level points are already covered from our assumption, and since there are at most $((n'-2k+3)^{m/2}-(n'-2k+1)^{m/2})$ non-inner Pareto front points, we know that the remaining levels are covered in at most $e((n'-2k+3)^{m/2}-(n'-2k+1)^{m/2})\mu n^k$ iterations in expectation.
\end{proof}

Summing up the runtimes of the three stages from Lemmas~\ref{lem:1stinner} to~\ref{lem:full}, we obtain the following theorem on the runtime of the full coverage of the Pareto front.

\begin{theorem}\label{thm:sms}
Consider the \mojzj problem with problem size $n$, number of objectives $m$, and gap size $k$. Write $n' = 2n/m$ for the block length. Assume that $k \le n'/2$. Denote by $M = (n' - 2k + 3)^{m/2}$ the size of the Pareto front and by $\overline M$ the size of the largest incomparable set (for which we know that $\overline M \le (2n/m+1)^{m/2}$).  Consider using the \SMS with $\mu\ge \overline{M}$ to solve this problem. Then after at most $e\mu (mk/2)^k (1+\ln m)+e\mu Mn^k=O(\mu M n^k)$ iterations ($\mu+e\mu (mk/2)^k (1+\ln m)+e\mu M n^k=O(\mu Mn^k)$ function evaluations) in expectation, the full Pareto front is covered.
\end{theorem}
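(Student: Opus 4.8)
The plan is to derive Theorem~\ref{thm:sms} as the straightforward composition of the three phase lemmas, Lemmas~\ref{lem:1stinner}, \ref{lem:allinner} and~\ref{lem:full}. First I would fix the set-up that these lemmas (through Lemma~\ref{lem:sur}) tacitly require: a reference point $r$ with $\HV_r(\{x\}) > 0$ for every $x \in \{0,1\}^n$. This is available for \mojzj because every objective value of \mojzj is at least $1$ -- the $\jump_{n',k}$ value equals $k + \onemax(y) \ge k \ge 1$ in the first case and $n' - \onemax(y) \ge 1$ in the second, where $\onemax(y) \le n'-1$. Hence any $r$ all of whose coordinates are less than $1$ works, and Lemma~\ref{lem:sur} applies throughout the run.

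The core argument is that the post-condition of each phase is the pre-condition of the next and, crucially, is never lost again. Lemma~\ref{lem:1stinner} gives expected time at most $e\mu(mk/2)^k(1+\ln m)$ until the population contains an inner Pareto optimum, and (by the non-decrease of $L(P)$ established inside its proof) this property persists. From any such population, Lemma~\ref{lem:allinner} gives expected time at most $en\mu(n'-2k+1)^{m/2}$ to cover all inner Pareto front points, and by Lemma~\ref{lem:sur} covered objective values are never lost afterwards. From a population covering all inner Pareto front points, Lemma~\ref{lem:full} gives expected time at most $e\big((n'-2k+3)^{m/2}-(n'-2k+1)^{m/2}\big)\mu n^k$ to cover the full front. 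Writing $T_1 \le T_2 \le T_3$ for the first times the three milestones hold and using that each milestone, once reached, is preserved, the strong Markov property gives $\mathbb E[T_2 - T_1]$ and $\mathbb E[T_3 - T_2]$ bounded by the respective lemma bounds; hence by linearity $\mathbb E[T_3]$ is at most the sum of the three estimates.

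It remains to simplify this sum. Since $k \ge 1$ we have $n \le n^k$, so the Lemma~\ref{lem:allinner} term is at most $e\mu n^k (n'-2k+1)^{m/2}$; adding the Lemma~\ref{lem:full} term telescopes to $e\mu n^k (n'-2k+3)^{m/2} = e\mu M n^k$. This yields the iteration bound $e\mu(mk/2)^k(1+\ln m) + e\mu M n^k$, and counting one function evaluation per iteration together with the $\mu$ evaluations of the initial population gives the stated function-evaluation count $\mu + e\mu(mk/2)^k(1+\ln m) + e\mu M n^k$. For the asymptotic form $O(\mu M n^k)$, note that $k \le n'/2 = n/m$ gives $mk \le n$, hence $(mk/2)^k \le (n/2)^k \le n^k$, while $1 + \ln m \le m \le 3^{m/2} \le (n'-2k+3)^{m/2} = M$ (using $n'-2k\ge 0$); thus the first term is also $O(\mu M n^k)$.

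I expect no serious obstacle here, since all the technical work is carried out in Lemmas~\ref{lem:sur}--\ref{lem:full}. The one point that needs to be stated carefully is the clean concatenation of the three expected times: one must observe explicitly that every milestone, once attained, is maintained in all later generations (by Lemma~\ref{lem:sur}, and for the first milestone additionally by the $L(P)$-monotonicity of Lemma~\ref{lem:1stinner}), so that the phase bounds may legitimately be added.
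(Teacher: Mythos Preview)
Your proposal is correct and follows exactly the paper's approach: the paper states only that the theorem is obtained by ``summing up the runtimes of the three stages from Lemmas~\ref{lem:1stinner} to~\ref{lem:full}'' and then justifies the $O(\mu M n^k)$ asymptotics via $k \le n/m$ and $M \ge 2^{m/2}$. Your write-up is in fact more careful than the paper's, since you make explicit the reference-point condition needed for Lemma~\ref{lem:sur}, the persistence of each milestone that licenses adding the three expected times, and the telescoping that collapses the Lemma~\ref{lem:allinner} and Lemma~\ref{lem:full} bounds into $e\mu M n^k$.
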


Since $k \le n'/2 = n/m$ and $M \ge 2^{m/2}$, we easily see that both runtime expressions in the theorem are $O(\mu M n^k)$, even when allowing $k$, $m$ and $\mu$ to depend on~$n$.
%Comparing with the \NSGA's exponential lower bound to cover the special case of \mojzj, Theorem~\ref{thm:sms} shows the efficiency of the \SMS for many objectives, and indicates that for the \NSGA, the non-dominated sorting is cool, but the crowding distance as the secondary criterion is deficient.

\subsection{Runtime of the GSEMO on \mojzj}
\label{ssec:gsemo}

Since our arguments above can easily be extended to analyze the runtime of the GSEMO on the \mojzj, we quickly do so for reasons of completeness. The GSEMO is the multi\-objective counterpart of the single-objective $(1+1)$~evolutionary algorithm. The initial population of the GSEMO consists of a single randomly generated solution. In each iteration, a solution is picked uniformly at random from the population to generate an offspring via standard bit-wise mutation. If this offspring is not dominated by any solution in the population, it is added to the population and all solutions weakly dominated by it are removed. It is not difficult to see that all solutions in the population of the GSEMO are mutually non-dominated. Hence, the population size of the GSEMO is at most $\overline{M}$ by Lemma~\ref{lem:nonsize}.

When analyzing the runtime of the GSEMO on \mojzj, the main difference is that \SMS requires a statement like Lemma~\ref{lem:sur} to ensure that previous progress is not lost via unlucky selection decisions. For the GSEMO, this property follows immediately from the selection mechanism, which keeps all non-dominated solutions. Together with the upper bound $M$ of the population size, we obtain the following theorem.
\begin{theorem}
Let $k\le n'/2$. Consider using the GSEMO to optimize the \mojzj problem. Then after an expected number of at most $e\overline{M} (mk/2)^k (1+\ln m)+eM\overline{M} n^k$ iterations (or $1 + e\overline{M} (mk/2)^k (1+\ln m)+eM\overline{M} n^k$ fitness evaluations), the full Pareto front is covered.
\end{theorem}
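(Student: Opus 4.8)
The plan is to mirror the three-phase analysis of Lemmas~\ref{lem:1stinner} to~\ref{lem:full}, replacing the two ingredients that are specific to the \SMS by their GSEMO counterparts. The first ingredient is the fixed population size $\mu \ge \overline M$, which in the \SMS analysis entered only through the lower bound $1/\mu$ on the probability of selecting a designated parent; for the GSEMO the population is at all times a set of mutually non-dominated solutions, so by Lemma~\ref{lem:nonsize} it has size at most $\overline M$ and a designated parent is selected with probability at least $1/\overline M$. The second ingredient is the ``no-forgetting'' Lemma~\ref{lem:sur}: for the GSEMO this is immediate from the selection rule, since a solution $x$ leaves the population only when some $y$ with $y \succeq x$ is added, and when $x$ is Pareto-optimal this forces $f(y) = f(x)$, so a Pareto-optimal objective value, once present, stays present. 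The same observation shows that the quantity $L(P)$ from the proof of Lemma~\ref{lem:1stinner} is non-decreasing along a GSEMO run.

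With these two facts in hand, I would re-run the three phases essentially verbatim, noting that all per-iteration success probabilities involve only standard bit-wise mutation and are therefore identical to those in the \SMS analysis. \emph{Phase~1}: starting from the single random initial individual, the computation of Lemma~\ref{lem:1stinner} with $1/\mu$ replaced by $1/\overline M$ (and with the block-improvement bound~\eqref{eq:lbpf} unchanged) shows that an inner Pareto optimum appears within an expected $e\overline M (mk/2)^k(1+\ln m)$ iterations. \emph{Phase~2}: once an inner Pareto optimum is present, Lemma~\ref{lem:allinner}, again with $\overline M$ in place of $\mu$, covers all at most $(n'-2k+1)^{m/2}$ inner Pareto front points in an expected $e n \overline M (n'-2k+1)^{m/2}$ iterations, since each missing inner point always has a covered neighbour reached from it by a single bit flip. \emph{Phase~3}: the level decomposition of Lemma~\ref{lem:full} then covers the at most $(n'-2k+3)^{m/2}-(n'-2k+1)^{m/2}$ non-inner Pareto front points in an expected $e\bigl((n'-2k+3)^{m/2}-(n'-2k+1)^{m/2}\bigr)\overline M n^k$ iterations.

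It remains to add the three bounds. Since $k \ge 1$ we have $n \le n^k$, so the Phase~2 bound is at most $e(n'-2k+1)^{m/2}\overline M n^k$, and adding the Phase~3 bound telescopes to at most $e(n'-2k+3)^{m/2}\overline M n^k = eM\overline M n^k$; combining with Phase~1 yields the claimed bound $e\overline M (mk/2)^k(1+\ln m) + eM\overline M n^k$ on the expected number of iterations. Accounting for the single fitness evaluation on the initial individual and one per iteration gives the stated fitness-evaluation bound. I do not expect a genuine obstacle here: the only steps requiring a little care are checking that the GSEMO selection mechanism really provides the monotonicity and no-forgetting properties that Lemma~\ref{lem:sur} supplies for the \SMS, and that the variable GSEMO population size is correctly bounded by $\overline M$ rather than by $M$.
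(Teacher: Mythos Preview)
Your proposal is correct and follows essentially the same approach as the paper: the paper's argument simply notes that the GSEMO selection mechanism gives the no-forgetting property of Lemma~\ref{lem:sur} for free and that Lemma~\ref{lem:nonsize} bounds the GSEMO population size by~$\overline M$, then re-uses the three-phase analysis of Lemmas~\ref{lem:1stinner}--\ref{lem:full} with $\overline M$ in place of~$\mu$. Your write-up in fact spells out more detail than the paper does, including the telescoping of the Phase~2 and Phase~3 bounds via $n \le n^k$ to obtain the single term $eM\overline M n^k$.
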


\section{Reduced Impact of Stochastic Population Update}\label{sec:spu}

Bian et al.~\cite{BianZLQ23} proposed a stochastic population update mechanism for the \SMS and proved, somewhat unexpectedly given the state of the art, that it can lead to significant performance gains. More precisely, it was proven that the classic \SMS with reasonable population size $\mu=\Theta(n)$ solves the bi-objective \ojzj problem in an expected number of $O(n^{k+2}) \cap \Omega(n^{k+1})$ function evaluations, whereas for the \SMS with stochastic population update, $O(n^{k+2} \min\{1, 2^{-k/4} n\})$ suffice. 
Hence, for $k=\omega(\log n)$, a super-polynomial speed-up was shown. 

In this section, we extend the analysis of \cite{BianZLQ23} to $m$ objectives. Unfortunately, we will observe that the larger population sizes necessary here reduce the impact of the stochastic population update. As in \cite{BianZLQ23}, we have no proven tight lower bounds for the \SMS with stochastic population update, but our upper-bound proofs suggest that the reduced effect of the stochastic population update on the runtime guarantee is real, that is, the impact on the true runtime is diminishing with the larger population sizes necessary in the many-objective setting.

As a side result, we simply and moderately improve the analysis of how the stochastic population update helps to traverse the valley of low-quality solutions of the \ojzj benchmarks. Whereas in~\cite{BianZLQ23} a complex drift argument was used to show a speed-up of $\mu / 2^{k/4}$, we give a simple waiting time argument that yields a speed-up of $\mu / (2^k-1)$, both in the bi-objective case of~\cite{BianZLQ23} and in the general case. 

\subsection{Stochastic Population Update}
\label{ssec:spu}

The rough idea of the stochastic selection proposed by Bian et al.~\cite{BianZLQ23}  is that a random half of the individuals survive into the next generation regardless of their quality. The individual to be discarded is chosen from the remaining individuals according to non-dominated sorting and hypervolume contribution as in the classic \SMS. This approach resembles the random mixing of two acceptance operators in the Move Acceptance Hyper-Heuristic studied recently~\cite{LehreO13,LissovoiOW23,DoerrDLS23}.

More precisely, after generating the offspring $x'$, the \SMS with stochastic selection chooses from $P_t \cup \{x'\}$ a set $R'$ of $\lfloor (\mu+1)/2 \rfloor$ solutions randomly with replacement. The individual $z'$ to be removed from $P_t \cup \{x'\}$ is then determined via non-dominated sorting and hypervolume contribution applied to $R'$ only.  
% only considers the removal in the randomly selected set of individuals with half size of the combined population. 
We note that with this mechanism, any solution enters the next generation with probability at least $1/2$. 
%See more descriptions in Algorithm~\ref{alg:spu}.
% \begin{algorithm}
% \caption{SMS-EMOA with Stochastic Population Update}\label{alg:spu}
% \begin{algorithmic}[1]
% \STATE Initialize $P_0$ by generating $\mu$ solutions uniformly at random from $\{0,1\}^n$
% \FOR {$t=0,1,2,\dots,$} 
% \STATE Select a solution $x$ uniformly at random from $P_t$
% \STATE Generate $x'$ by flipping each bit of $x$ independently with probability $1/n$
% \STATEx ~\textit{~~\%\%~Stochastic Population Update}
% \STATE Construct $R'$ by uniformly at random selecting $\lfloor (\mu+1)/2 \rfloor$ solutions from $R_t=P_t\cup \{x'\}$ with replacement
% \STATE Use fast-non-dominated-sort()~\cite{DebPAM02} to divide $R'$ into $F_1,…,F_{i^*}$ \label{stp:fronts}
% \STATE Calculate $\Delta_r(z,F_{i^*})$ for all $z\in F_{i^*}$ and find $D=\arg\min_{z\in F_{i^*}}{\Delta_r(z,F_{i^*})}$
% \STATE Uniformly at random pick $z'\in D$ and $P_{t+1}=R_t\setminus \{z'\}$ \label{stp:rm}
% \ENDFOR
% \end{algorithmic}
% \end{algorithm}

To analyze the runtime of the \SMS with stochastic population update, we first derive and formulate separately two insights on the survival of solutions. The following Lemmas~\ref{lem:surs} is an elementary consequence of the algorithm definition, whereas Lemma~\ref{lem:sursu} builds on Lemma~\ref{lem:sur}. 

\begin{lemma}\label{lem:surs}
Consider using the \SMS with stochastic population update to optimize any specific problem. For any iteration $t$ and $x\in R_t$, we have that there is a $y$ with $f(y)=f(x)$ in $P_{t+1}$ with probability at least $1/2$.
\end{lemma}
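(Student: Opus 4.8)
The plan is to argue directly from the definition of the stochastic population update. First I would fix an iteration $t$ and a solution $x \in R_t = P_t \cup \{x'\}$. Recall that the \SMS with stochastic population update forms a multiset $R'$ of $\lfloor (\mu+1)/2 \rfloor$ elements drawn uniformly with replacement from $R_t$, then removes exactly one individual $z'$ of $R_t$, where $z'$ is chosen (via non-dominated sorting and hypervolume contribution) from among the elements that actually appear in $R'$. The crucial observation is that an individual can only be removed if it is one of the sampled elements: if $x$ is never picked into $R'$, then $x$ is certainly not the removed individual $z'$, so $x$ itself survives into $P_{t+1}$ and we may take $y := x$.

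The key step is then to bound the probability that $x$ is not sampled into $R'$. Since the $\lfloor (\mu+1)/2 \rfloor$ draws are independent and each individual draw picks $x$ with probability $1/(\mu+1)$ (there are $\mu+1$ individuals in $R_t$), the probability that $x$ is avoided in all draws is $\left(1 - \frac{1}{\mu+1}\right)^{\lfloor (\mu+1)/2 \rfloor}$. I would lower-bound this by $1/2$ using the standard estimate $(1-1/N)^{N} \ge 1/e \cdot (\text{correction})$; more precisely, for $N = \mu+1 \ge 2$ one has $\left(1-\frac1N\right)^{N/2} \ge$ something like $e^{-1/2}\cdot(\dots)$, which is not quite $1/2$. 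So instead I would use the cleaner bound $\left(1-\frac1N\right)^{N-1} \ge \tfrac1e$ is still too weak; the honest route is $\left(1-\frac1N\right)^{\lfloor N/2\rfloor} \ge \left(1-\frac1N\right)^{N/2}$, and since $\left(1-\frac1N\right)^{N} \ge \tfrac14$ for all $N \ge 2$ (checking $N=2$ gives exactly $\tfrac14$, and the sequence increases to $1/e > 1/4$), we get $\left(1-\frac1N\right)^{N/2} \ge \tfrac12$. Hence the probability that $x$ is not sampled is at least $1/2$, and on that event $x$ survives, giving $y := x$ with $f(y) = f(x)$ in $P_{t+1}$.

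The main (minor) obstacle is purely the arithmetic of the floor and the exponent: one must be slightly careful that $\lfloor (\mu+1)/2\rfloor \le (\mu+1)/2$ so that the "avoidance" probability is at least $\left(1-\frac{1}{\mu+1}\right)^{(\mu+1)/2}$, and then verify the elementary inequality $(1-1/N)^{N} \ge 1/4$ for integers $N \ge 2$ (equality at $N=2$, monotone increasing thereafter toward $1/e$). With that in hand the lemma follows with room to spare. I should also note that the statement only claims the existence of \emph{some} $y$ with $f(y)=f(x)$, so even in the complementary event where $x$ happens to be sampled we do not need to say anything; the bound $1/2$ comes entirely from the "$x$ not sampled" event. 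This makes the proof a two-line waiting-time style observation rather than anything requiring the structure of hypervolume contributions at all.
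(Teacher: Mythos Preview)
Your proposal is correct and follows essentially the same idea as the paper: $x$ can only be removed if it is sampled into $R'$, and this happens with probability at most $1/2$. The paper obtains this bound more directly via the union bound $\Pr[x \in R'] \le \lfloor(\mu+1)/2\rfloor \cdot \tfrac{1}{\mu+1} \le \tfrac12$, which avoids the detour through the inequality $(1-1/N)^N \ge 1/4$.
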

\begin{proof}
We simply calculate the probability of $x$ being removed after the population update. Since $x$ can be removed only if it is selected in $R'$, we know that it will be removed with probability at most $\frac{\lfloor (\mu+1)/2 \rfloor}{\mu+1} \le 1/2$, and hence $f(x)$ will be maintained in $P_{t+1}$ with probability at least $1/2$.
\end{proof}

\begin{lemma}\label{lem:sursu}
Consider using the \SMS with stochastic population update and with $\mu\ge 2\overline{M}+1$ to optimize any specific problem, where $\overline{M}$ is the maximum size of a set of incomparable solutions for this problem. 
If at some time $t$ the combined parent and offspring population $R_t$ of the \SMS contains some solution $x$, then at any later time $s > t$ the parent population $P_s$ contains a solution $y$ such that $y \succeq x$. 
%For any $x\in F_1$ for the current generation $t$, we have that there is $y$ with $f(y)=f(x)$ in $P_{t+1}$. Specifically, any Pareto front point will be maintained in all future populations once it is reached. 
\end{lemma}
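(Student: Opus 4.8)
The plan is to transfer the proof of Lemma~\ref{lem:sur} almost verbatim, the doubling of the population-size requirement ($\mu \ge 2\overline{M}+1$ instead of $\mu\ge\overline{M}$) serving to compensate for the fact that the stochastic update bases its removal decision only on the random sub-sample $R'$ of size $q := \lfloor (\mu+1)/2 \rfloor$ rather than on the full combined population $R_t$. By induction on $s$ it suffices to prove the claim for $s = t+1$. Exactly as in Lemma~\ref{lem:sur}, a domination argument lets us assume without loss of generality that $x$ lies in the first front $F_1$ of the non-dominated sorting of $R_t$; then no individual of $R_t$ strictly dominates $x$, so for $y \in R_t$ the relation $y \succeq x$ is equivalent to $f(y)=f(x)$, and it suffices to exhibit an individual with objective value $f(x)$ in $P_{t+1}$.

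First, $\mu \ge 2\overline{M}+1$ yields $q = \lfloor (\mu+1)/2 \rfloor \ge \overline{M}+1$. If $x$ is not among the sampled individuals $R'$, then the removed individual $z'$ (which by definition lies in $R'$) differs from $x$, and we are done; likewise if $z' \ne x$. So assume $x \in R'$ and $z' = x$. Since $x$ is non-dominated in $R_t \supseteq R'$, it is non-dominated in $R'$, hence lies in the first front of the non-dominated sorting of $R'$; as $z' = x$ always lies in the \emph{last} front $F_{i^*}$ of that sorting, this forces $i^* = 1$, i.e., $R'$ is a set of mutually non-dominated individuals. Now the distinct objective values occurring in $R'$ form an antichain in the dominance order and are therefore witnessed by a set of pairwise incomparable individuals, so there are at most $\overline{M}$ of them; since $|R'| = q \ge \overline{M}+1$, two members of $R'$ share an objective value. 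Such individuals have hypervolume contribution zero, so the minimum contribution over $R' = F_{i^*}$ is zero, whence $z' = x$ has contribution zero. In a set of mutually non-dominated individuals, and with the reference point $r$ strictly below all attainable objective vectors as assumed, this is possible only if some other individual $w \in R'$ has $f(w) = f(x)$. Hence $R_t$ contains, besides $x$, a further individual $w$ with $f(w)=f(x)$; after removing $z'=x$ this $w$ remains in $P_{t+1}$ and satisfies $w \succeq x$, contradicting the assumption that the claim fails. The ``in particular'' statement for Pareto optima is deduced exactly as in Lemma~\ref{lem:sur}.

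The step requiring the most care is the multiplicity bookkeeping: $R'$ is a sub-sample of the \emph{multiset} $R_t$, and one must ensure that the further individual $w$ produced by the pigeon-hole / zero-contribution argument is genuinely a different occurrence in $R_t$ rather than a second copy of $x$ that happens to have been sampled twice into $R'$ --- this is where the precise sampling convention and the fact that only a single occurrence of $z'$ is removed from $R_t$ have to be used. Everything else is a routine adaptation of the argument of Lemma~\ref{lem:sur}.
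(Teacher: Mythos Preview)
Your proof is correct and takes essentially the same approach as the paper, which simply observes that if $x \notin R'$ then $x$ survives trivially, and if $x \in R'$ then the removal step on $R'$ is exactly a standard \SMS removal on a set of size $\lfloor(\mu+1)/2\rfloor > \overline{M}$, so Lemma~\ref{lem:sur} applies directly to $R'$. You unpack the pigeon-hole and zero-contribution argument of Lemma~\ref{lem:sur} inline rather than citing it as a black box, but the underlying idea is identical.
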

\begin{proof}
If $x\notin R'$, then it survives automatically into the next generation. If $x \in R'$, then it is subject to a selection as in the standard \SMS. Since $R'=\lfloor (\mu+1)/2 \rfloor > \overline{M}$, Lemma~\ref{lem:sur} gives the desired $y$.  
%Let \merk{$R'_{t}=F_1\cup \dots\cup F_{i^*}$} be the non-dominated sorting of $R'_{t}$. As in the proof of Lemma~\ref{lem:sur}, we may assume that $x\in F_1$. If $i^*>1$, then the removal will happen in $F_{i^*}\neq F_1$, and hence $x\in P^{t+1}$.
%
%If $R'_{t}=F_1$, \merk{WJ: maybe delete ``from Lemma~\ref{lem:nonsize},'' here?} we know that $|f(F_1)|\le \overline{M}$ \change{by the definition of $\overline{M}$}. Since \merk{WJ: maybe delete ``there are'' here?} $|R'| \change{=} \lfloor(\mu+1)/2\rfloor \ge \overline{M}+1 > \overline{M}$, we know that there exist $z_1,z_2\in F_1$ such that $f(z_1)=f(z_2)$. 
%Hence $F_1$ contains multiple copies of some objective values and as in Lemma~\ref{lem:sur} this ensures that no objective values is \merk{lost} in the selection of $P_{t+1}$.
\end{proof}

\subsection{Runtime Analysis for the \SMS With Stochastic Population Update}

We now analyze the runtime of the \SMS with stochastic population update on the many-objective \ojzj problem. We start by distilling why the \SMS profits from the stochastic population update and tightening the analysis of~\cite{BianZLQ23}. Our analysis is also significantly simpler than the previous one, which might easy future works in this direction. 

The core of the analysis in~\cite{BianZLQ23} is proving that the \SMS can profit from the stochastic population update by accepting solutions in the valley of low-quality solutions of the bi-objective \ojzj benchmark. When extracted from the long proof in~\cite{BianZLQ23}, this would give the following lemma.

\begin{lemma}[\cite{BianZLQ23}] \label{lem:drift}
  Consider using the \SMS with stochastic population update to optimize the bi-objective \ojzj benchmark. Assume that $\mu \ge 2(M+1)$. Assume that at some time $t$, the population contains an individual with $k$ ones. Then the expected number of additional iterations to generate the all-zero solution is at most $\min\{e \mu n^k, (e\mu n^{k/2}+1) 2e\mu^2 n^{k/2} / 2^{k/4}\} = e \mu n^k \min\{1, (1+o(1)) 2 e \mu  / 2^{k/4}\}$.
\end{lemma}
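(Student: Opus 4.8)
The plan is to prove the two expressions inside the minimum separately, since the minimum of two valid upper bounds is again one. \emph{The bound $e\mu n^k$.} An individual with exactly $k$ ones has objective value $(2k,n)$, which is the valley-side extremal point of the Pareto front of the bi-objective \ojzj benchmark and in particular is Pareto-optimal. For this benchmark one checks that the largest incomparable set has size exactly $M$ (no valley objective value fits into the decreasing staircase formed by the Pareto-front values), so $\mu\ge 2(M+1)\ge 2\overline M+1$ and Lemma~\ref{lem:sursu} applies: from time $t$ on, every population contains an individual with objective value $(2k,n)$, hence with exactly $k$ ones. In each later iteration we select such an individual with probability $1/\mu$ and flip precisely its $k$ one-bits with probability $n^{-k}(1-\tfrac1n)^{n-k}\ge 1/(en^k)$, producing $0^n$; a geometric waiting-time argument bounds the expected time to this event by $e\mu n^k$.

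\emph{The bound exploiting the stochastic population update.} Here the idea is to replace the single $k$-bit jump by two jumps of length about $k/2$, using that under the stochastic population update every individual survives each iteration with probability at least $1/2$ regardless of its quality, so that a \emph{dominated} bridge individual in the fitness valley can serve as a springboard before being discarded. I would split the run into \emph{rounds}. In Phase~1 of a round we wait until the ever-present $k$-ones individual is selected and exactly $\ell:=\lfloor k/2\rfloor$ of its ones are flipped, producing a bridge individual with $k-\ell$ ones (a dominated point of objective value $(k+(k-\ell),k-\ell)$); since \emph{any} $\ell$-subset of its ones serves, this occurs in a given iteration with probability at least $\binom{k}{\ell}n^{-\ell}(1-\tfrac1n)^{n-\ell}/\mu\ge\binom{k}{\ell}/(e\mu n^{\ell})$, much larger than that of a direct jump because of the $\binom{k}{\ell}$ factor. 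In Phase~2, as long as the population still contains an individual with the bridge's objective value, we attempt to produce $0^n$ by selecting it and flipping its remaining $k-\ell$ ones, which has probability at least $1/(e\mu n^{k-\ell})$ per iteration; by Lemma~\ref{lem:surs} the bridge objective value is lost in any single iteration with probability at most $1/2$, and when this happens the round restarts.

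\emph{Combining, and the main obstacle.} A standard competing-events argument shows that a round that has reached Phase~2 ends with $0^n$ with conditional probability at least $\tfrac{1/(e\mu n^{k-\ell})}{1/(e\mu n^{k-\ell})+1/2}$, so the expected number of rounds is $O(\mu n^{k/2})$ -- this is the $(e\mu n^{k/2}+1)$ factor -- while the expected number of iterations per round is obtained by summing the two waiting-time estimates above, and it is here, using that $\binom{k}{\ell}$ is exponentially large in $k$, that the $2^{-k/4}$ saving appears. I expect the genuine difficulty to be precisely this last quantitative balance: the length of the survival window of a \emph{dominated} valley individual in Phase~2 is not under our control, and a longer window gives more jump attempts per round but also longer rounds, so the speed-up that survives this trade-off is only polynomial in $\mu$ and only of order $2^{k/4}$ rather than $2^{k/2}$ or $2^{k}$. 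In \cite{BianZLQ23} this is handled by an intricate drift argument on the whole population; I would instead attempt a direct per-round waiting-time analysis as above, tracking the factors $(1-\tfrac1n)^{n-\ell}\ge 1/e$ and $\binom{k}{\ell}\ge 2^{k}/(k+1)$ explicitly -- essentially the simplification that, applied more crudely, yields the cleaner $\mu/(2^{k}-1)$-type estimate proved afterwards. Finally, the minimum with $e\mu n^k$ renders the whole bound trivially correct for small $k$, where the second expression is the larger of the two.
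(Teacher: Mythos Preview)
The paper does not give its own proof of this lemma: it is quoted from \cite{BianZLQ23}, and the only information the paper provides about the proof is that it ``relies on a technical application of the additive drift theorem of He and Yao~\cite{HeY01}, in particular, with a non-intuitive potential function.'' So there is no detailed argument in the paper for you to have matched; the intended comparison point is simply ``additive drift on a carefully designed potential.''

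Your route is \emph{not} that one. What you sketch---create a half-way bridge individual in the valley, exploit Lemma~\ref{lem:surs} to let it survive one step with probability at least $1/2$, then attempt the remaining jump, and wrap this in a restart/round analysis---is precisely the elementary waiting-time idea the paper itself uses \emph{in the next lemma} to replace Lemma~\ref{lem:drift}. There the authors sum over all $2^k$ subsets $Y$ of the $k$ one-positions the probability of the two-iteration event ``flip $Y$, survive, flip the complement,'' obtaining success probability $\ge 2^{k-1}e^{-2}\mu^{-2}n^{-k}$ per two iterations and hence an expected time of $4e^2\mu^2 n^k/2^k$. Your Phase~1/Phase~2 decomposition with $\ell=\lfloor k/2\rfloor$ is a minor variant of the same computation: the $\binom{k}{\ell}$ ways to choose the bridge play exactly the role of the sum over~$Y$.

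The one point where your proposal goes astray is the claim that ``the speed-up that survives this trade-off is only \ldots of order $2^{k/4}$.'' That is not what your own argument gives. If you actually multiply your estimates---expected rounds $O(\mu n^{k-\ell})$, expected iterations per round $O\bigl(\mu n^{\ell}/\binom{k}{\ell}\bigr)+O(1)$---you get $O\bigl(\mu^2 n^k/\binom{k}{\lfloor k/2\rfloor}\bigr)$, which is a saving of order $2^k/\sqrt{k}$, not $2^{k/4}$. In other words, your method \emph{overshoots} the bound stated in Lemma~\ref{lem:drift} and lands essentially on the paper's improved replacement lemma. This still proves Lemma~\ref{lem:drift} (a stronger bound implies the weaker one), but you should not expect your calculation to reproduce the specific $(e\mu n^{k/2}+1)\,2e\mu^2 n^{k/2}/2^{k/4}$ expression, which is an artefact of the drift argument in \cite{BianZLQ23}, not of the two-step idea you are using.
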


The proof of this lemma relies on a technical application of the additive drift theorem \cite{HeY01}, in particular, with a non-intuitive potential function. 

Since the valleys of low-quality solutions of the \ojzj benchmark have the same shape regardless of the number of objectives, we could use the elementary lemmas from the previous subsection to extend the lemma above to the many-objective setting. Since we are not fully satisfied with the complex proof of Lemma~\ref{lem:drift}, we propose an alternative one that appears both simpler and gives a tighter result. We formulate it in the many-objective setting, but note that it includes the bi-objective setting. In terms of the bound, our result replaces the factor of $(1+o(1))2^{-k/4}$, which is the term responsible for the runtime improvement, by a factor of $2^{-k+1}$, so essentially the fourth power of the previous factor.

\begin{lemma}
  Consider using the \SMS with stochastic population update to optimize the $m$-objective \ojzj benchmark, $m \ge 2$. Assume that $\mu \ge 2\overline{M}+1$. Assume that at some time $t$, the population contains an individual $x$ with $k$ ones in the $i$-th block. Then the expected number of additional iterations to generate an individual $z$ with only zeros in the $i$-th block and with $f_j(z) = f_j(x)$ for all $j \in [1..m] \setminus \{2i-1,2i\}$  is at most $e \mu n^k \min\{1, 4 e \mu  / 2^k\}$.
\end{lemma}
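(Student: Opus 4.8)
The plan is to establish two separate upper bounds and take the smaller one: the ``direct'' bound $e\mu n^k$, and a bound of order $\mu^2 n^k/2^k$ that exploits the stochastic population update. The lemma is then immediate from $\min\{e\mu n^k,\ O(\mu^2 n^k/2^k)\}=e\mu n^k\min\{1,\ O(\mu/2^k)\}$.

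For the direct bound the point is that the relevant value of $x$ is never lost. Since $\mu\ge 2\overline M+1$, Lemma~\ref{lem:sursu} guarantees that at every later time the population contains some $y\succeq x$. Comparing the two objectives associated with block $i$, i.e.\ $f_{2i-1}(y)\ge f_{2i-1}(x)=2k$ and $f_{2i}(y)\ge f_{2i}(x)=n'$ together with $k\le n'/2$, one checks that $y$ has exactly $k$ ones in block $i$; and since $x$ can be assumed to be a Pareto optimum in the way this lemma is applied, $y\succeq x$ forces $f(y)=f(x)$, so $y$ agrees with $x$ in all objectives outside block $i$. Selecting this $y$ (probability $\ge 1/\mu$) and flipping exactly the $k$ one-bits of block $i$ and no other bit (probability $\ge n^{-k}(1-1/n)^{n-k}\ge 1/(en^k)$) produces the desired $z$ in that iteration, so the expected number of iterations is at most $e\mu n^k$.

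For the improved bound I would spread the $k$ required bit-flips over two consecutive generations, using Lemma~\ref{lem:surs}: in the stochastic variant any offspring's objective value survives the next removal step with probability at least $1/2$. In the first generation of a pair, select the ever-present $y$ above and flip exactly some nonempty subset $S$ of its $k$ one-bits in block $i$ — of size $s:=|S|\in[1..k-1]$ — and nothing else; the offspring $w$ then has $k-s$ ones in block $i$ and agrees with $x$ in all other objectives. By Lemma~\ref{lem:surs}, with probability $\ge 1/2$ the second generation starts with some $w'$ in the population having $f(w')=f(w)$, hence also $k-s$ ones in block $i$ and the same outside objective values. Selecting $w'$ (probability $\ge 1/\mu$) and flipping exactly its $k-s$ one-bits in block $i$, nothing else, yields $z$. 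The events for distinct $s$ and for distinct subsets $S$ of a given size are disjoint, so the probability that a given pair of generations produces $z$ is at least
\[
\sum_{s=1}^{k-1}\binom ks\cdot\frac 1\mu\, n^{-s}\Bigl(1-\tfrac1n\Bigr)^{n-s}\cdot\frac12\cdot\frac 1\mu\, n^{-(k-s)}\Bigl(1-\tfrac1n\Bigr)^{n-(k-s)}=\frac{2^k-2}{2\mu^2 n^k}\Bigl(1-\tfrac1n\Bigr)^{2n-k},
\]
which for $k\ge 2$ is at least $(2^k-2)/(2e^2\mu^2 n^k)$. Since the backstop $y$ is present at the start of every generation, disjoint consecutive pairs form (conditionally) independent Bernoulli trials with this success probability, so the expected number of iterations is at most twice the reciprocal, i.e.\ $O(\mu^2 n^k/2^k)$; tightening the bookkeeping (e.g.\ also counting the $s=k$ contribution and using a renewal rather than disjoint pairs) gives the stated constant $4e\mu/2^k$.

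The conceptual crux is this two-step move: going from $y$ to $z$ in one generation needs a single fixed $k$-bit flip (probability $n^{-k}$), whereas splitting the flips across two generations lets us sum over the $\sum_s\binom ks=\Theta(2^k)$ choices of which $s$ ones to flip first, and this is exactly where the $2^k$ speed-up comes from; what makes it work is that the intermediate valley individual, though dominated, survives with probability at least $1/2$ under the stochastic update. The points needing care — and the main place a naive argument goes wrong — are that Lemma~\ref{lem:surs} preserves the intermediate individual's \emph{objective value}, not the individual itself (which is enough, since only the number of ones in block $i$ and the outside objective values matter for the second step), and handling the ``fall-back'' caused by the intermediate individual dying, which here is absorbed cleanly by the pair/renewal structure and the always-available backstop $y$.
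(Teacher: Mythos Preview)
Your proposal is correct and takes essentially the same approach as the paper: a two-iteration waiting-time argument that splits the $k$ required bit-flips across two generations and sums over the subsets of one-bits flipped first, with Lemma~\ref{lem:surs} ensuring the dominated intermediate survives with probability at least $1/2$ and Lemma~\ref{lem:sursu} ensuring the backstop with $k$ ones in block~$i$ persists. The only cosmetic difference is that the paper sums over \emph{all} $2^k$ subsets $Y\subseteq X$ (including $Y=\emptyset$ and $Y=X$) rather than your $s\in[1..k-1]$, which yields the constant $4e\mu/2^k$ directly without the ``tightening the bookkeeping'' step; your explicit acknowledgment that $x$ is Pareto optimal in the intended application is a point the paper leaves implicit.
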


\begin{proof}
  The proof of this lemma consists of a simple waiting time argument, that is, we compute the probability $p$ that the desired solution $z$ is generated from $x$ in at most two iterations and then bound the expected waiting time by $2/p$. 
	
	Let $x$ be as in the assumptions of this lemma. Let $X$ be the set of indices of ones it has in the $i$-th block. Let $Y \subseteq X$. Let $E_Y$ be the event that in the first iteration, (i)~$x$ is selected as parent for mutation, (ii)~it is mutated into an individual $y$ that agrees with $x$ in all bits except for exactly the ones in $Y$, and (iii)~this $y$ is taken into the next parent population (despite being dominated by $x$ when $Y \notin \{\emptyset,X\}$), and in the next iteration, (iv)~this $y$ is selected as parent for mutation and (v)~mutated into a solution $z$ that agrees with $x$ in all bits except for exactly the ones in~$X$. The probabilities for these steps to happen as described are $1/\mu$ for step (i) and (iv), $(1/n)^{|Y|} (1-1/n)^{n-|Y|}$ for step~(ii), at least $1/2$ for step~(iii) by Lemma~\ref{lem:surs}, and $(1/n)^{k-|Y|} (1-1/n)^{n-(k-|Y|)}$ for step~(v), noting that here exactly the bits in $X \setminus Y$ have to be flipped. Multiplying these probabilities, we see that $\Pr[E_Y] \ge \tfrac 12 (1-1/n)^{2n-k} \mu^{-2} n^{-k} \ge \tfrac 12 e^{-2} \mu^{-2} n^{-k}$. There are $2^k$ disjoint events~$E_Y$, each ensuring that after two iterations the population contains an individual $z$ as desired. Hence the probability $p$ to obtain such a $z$ within two iterations is at least $p \ge \sum_Y \Pr[E_Y] \ge 2^k \cdot \tfrac 12 e^{-2} \mu^{-2} n^{-k}$.
	
	By Lemma~\ref{lem:sursu}, the population always contains an individual satisfying the assumptions on~$x$ and having the same objective value as~$x$. Hence the time $T$ to generate the desired individual~$z$ is stochastically dominated by two times a geometric distribution with success rate~$p$, hence $E[T] \le 2/p \le 4 e^2 \mu^2 n^k / 2^k$.	
\end{proof}

Since the proofs of Lemmas~\ref{lem:1stinner} and~\ref{lem:allinner} for the classic \SMS mostly relied on elementary properties of standard bit-wise mutation and on the survival guarantee of Lemma~\ref{lem:sur}, we can now use the survival guarantee of Lemma~\ref{lem:sursu} to obtain analogous results for the \SMS with stochastic selection (at the price of requiring essentially twice the population size). 
%built on , we know that these lemmas hold merely relies on that $f(F_1)$ value will be maintained once it is reached. Hence, from Lemma~\ref{lem:sursu}, we have 
This yields the following estimates for the time to obtain at least one inner Pareto optimum and the time to cover all inner Pareto front points starting from a population with at least one inner Pareto optimum.

\begin{lemma}
\label{lem:innerspu}
Let $k\le n'/2$. Consider using the \SMS with stochastic population update and with $\mu\ge 2\overline{M}+1$ to optimize the \mojzj problem. Then 
\begin{itemize}
\item after at most $e\mu (mk/2)^k (1+\ln m)$ iterations in expectation, the population (and also the populations afterwards) contains at least one inner Pareto optimum;
\item after another at most $en\mu (n'-2k+1)^{m/2}$ iterations in expectation, all inner Pareto front points are covered.
\end{itemize}
\end{lemma}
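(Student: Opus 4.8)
The statement is essentially a "reproof" of Lemmas~\ref{lem:1stinner} and~\ref{lem:allinner} with the survival guarantee of Lemma~\ref{lem:sur} replaced by that of Lemma~\ref{lem:sursu}, so the plan is to follow those two proofs verbatim and flag exactly where the stochastic-update version needs a different argument. First I would observe that the hypothesis $\mu \ge 2\overline M + 1$ ensures $\lfloor (\mu+1)/2\rfloor \ge \overline M + 1 > \overline M$, so Lemma~\ref{lem:sursu} applies and gives: whenever $R_t$ contains some $x$, every later population contains a $y \succeq x$. This is the precise analogue of Lemma~\ref{lem:sur}, and it is the only place the stochastic update interacts with the argument.

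\textbf{First bullet (time to an inner Pareto optimum).} I would copy the proof of Lemma~\ref{lem:1stinner}. Define $I(x)$, $L(x)$, $L(P) = \max_{x\in P} L(x)$ as there. The monotonicity argument that $L(P)$ cannot decrease goes through unchanged once "Lemma~\ref{lem:sur}" is replaced by "Lemma~\ref{lem:sursu}": if $x$ attains $L(P)$ then the next population has some $y \succeq x$, and the block-wise inequalities $f_{2i-1}(y)\ge f_{2i-1}(x)$, $f_{2i}(y)\ge f_{2i}(x)$ force $|y_{B_i}|_1 \in [k..n'-k]$ whenever $|x_{B_i}|_1 \in [k..n'-k]$, so $L(y)\ge L(x)$. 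The lower bound $\frac{1}{e}(2/(mk))^k$ on the probability of fixing one more block in a single iteration is purely a property of standard bit-wise mutation (via Lemma~\ref{lem:inc}) and is untouched by the selection mechanism. Summing $\sum_{i=0}^{m/2-1} e\mu (mk/2)^k/(m/2-i) \le e\mu(mk/2)^k(1+\ln m)$ closes it.

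\textbf{Second bullet (covering all inner front points).} Again copy the proof of Lemma~\ref{lem:allinner}. If $u \in F^*$ is an uncovered inner front point with a covered neighbor $v$, then by the first bullet and Lemma~\ref{lem:sursu} that neighbor (a Pareto optimum) stays in all future populations, so with probability at least $\frac{1}{n\mu}(1-1/n)^{n-1} \ge \frac{1}{en\mu}$ per iteration we generate an offspring with value $u$ by selecting that parent and flipping the single differing bit; once generated, Lemma~\ref{lem:sursu} keeps it. There are at most $(n'-2k+1)^{m/2}$ such points, giving $en\mu(n'-2k+1)^{m/2}$ expected iterations. (One should note, as in the classic case, that an uncovered-with-covered-neighbor point always exists while the inner front is incomplete, because the inner front points form a connected grid under the neighbor relation.)

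\textbf{Main obstacle.} There is no deep obstacle; the only thing to get right is the bookkeeping on the population size: Lemma~\ref{lem:sursu} needs $\mu \ge 2\overline M + 1$ (not merely $\mu \ge \overline M$), which is why the hypothesis is stated that way, and the time bounds come out identical to the classic case because the per-step mutation probabilities are unchanged. The slight subtlety worth stating explicitly is that in the stochastic update a Pareto-optimal value could in principle be the one selected for removal in a given step, but Lemma~\ref{lem:sursu} already accounts for this by guaranteeing that a weakly-dominating (here, equal-valued) individual persists, so no progress toward the front is ever lost.
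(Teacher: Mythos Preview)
Your proposal is correct and matches the paper's approach exactly: the paper does not give a standalone proof of this lemma but simply remarks that the proofs of Lemmas~\ref{lem:1stinner} and~\ref{lem:allinner} go through verbatim once the survival guarantee of Lemma~\ref{lem:sur} is replaced by Lemma~\ref{lem:sursu}, at the price of the larger population size $\mu\ge 2\overline{M}+1$. Your write-up spells out this substitution more explicitly than the paper does, but the content is identical.
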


Now we consider the runtime for the full coverage of the Pareto front after all inner Pareto front points are covered. This analysis follows precisely  the proof of Lemma~\ref{lem:full} for the original \SMS except that we replace the estimate for expected time to obtain an extremal block, $e\mu n^k$ in that lemma, by the potentially lower number $e \mu n^k \min\{1, 4 e \mu  / 2^k\}$. This immediately gives the following result.

\begin{lemma}\label{lem:fullspu}
Let $k\le n'/2$. Consider using the \SMS with stochastic population update and with $\mu\ge 2\overline{M}+1$ to optimize the \mojzj problem. Assume that the current population covers all inner Pareto front points. Then after at most $e\mu ((n'-2k+3)^{m/2}-(n'-2k+1)^{m/2}) n^k \min\{1,4e\mu / 2^k\}$ iterations in expectation, the full Pareto front is covered.
\end{lemma}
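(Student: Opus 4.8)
The plan is to reuse almost verbatim the level decomposition and inductive argument from the proof of Lemma~\ref{lem:full}, changing only the two ingredients that differ in the stochastic setting: the survival guarantee of Lemma~\ref{lem:sur} is replaced by that of Lemma~\ref{lem:sursu} (applicable since $\mu \ge 2\overline{M}+1 > \overline{M}$), and the crude waiting time $e\mu n^k$ for turning one more block extremal is replaced by the sharper bound $e\mu n^k \min\{1, 4e\mu/2^k\}$ furnished by the preceding lemma on crossing a block's valley of low-quality solutions (whose hypothesis $\mu \ge 2\overline{M}+1$ is exactly the one assumed here).

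Concretely, I would again partition the Pareto front into levels $L_0, \dots, L_{m/2}$, where $L_j$ collects the objective vectors of Pareto optima with exactly $j$ extremal blocks, and prove by induction on $j$ that all points of $L_j$ are covered within the stated number of iterations; the base case $j=0$ is the hypothesis of the lemma. For the induction step, fix $v \in L_j$ ($j \ge 1$) and a block $B_i$ that is extremal in $v$; say $v_{B_i} = 0^{n'}$, the case $v_{B_i} = 1^{n'}$ being identical after complementing block $B_i$, which by the definition of \mojzj merely swaps the roles of $f_{2i-1}$ and $f_{2i}$. Let $u$ be the Pareto front point obtained from $v$ by changing the $i$-th block value from $k$ ones to $2k$ ones; then $u \in L_{j-1}$ and $u$ agrees with $v$ in all objectives except $f_{2i-1}, f_{2i}$. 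By the induction hypothesis $u$ is covered, so some individual $u'$ in the current population satisfies $f(u') = f(u)$, and $f_{2i-1}(u') = 2k$ forces $|u'_{B_i}|_1 = k$ (the other branch of the \jump definition would require $|u'_{B_i}|_1 = n'-2k > n'-k$, impossible for $k \ge 1$). Hence $u'$ satisfies the hypothesis of the valley-crossing lemma with this block index.

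Applying that lemma yields that within an expected $e\mu n^k \min\{1, 4e\mu/2^k\}$ iterations the population produces an individual $z$ with $z_{B_i} = 0^{n'}$ and $f_\ell(z) = f_\ell(u') = f_\ell(v)$ for all $\ell \notin \{2i-1, 2i\}$; together with $f_{2i-1}(z) = k = f_{2i-1}(v)$ and $f_{2i}(z) = n'+k = f_{2i}(v)$, this gives $f(z) = f(v)$. Since $f(v)$ is Pareto-optimal, any individual weakly dominating $z$ has exactly the objective value $f(v)$, so by Lemma~\ref{lem:sursu} the value $f(v)$ is present in all later populations. Summing the per-point waiting times over the at most $(n'-2k+3)^{m/2} - (n'-2k+1)^{m/2}$ non-inner Pareto front points, handled in order of non-decreasing level, gives the claimed bound. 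I do not expect a genuine obstacle: the valley-crossing lemma was designed precisely for this block operation, so the only points that warrant a line of care are the identification of the witnessing individual $u'$ (done above) and the reduction of the all-ones case to the all-zeros case via the block-complementation symmetry of \mojzj.
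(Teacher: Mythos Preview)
Your proposal is correct and is precisely the approach the paper takes: it explicitly says the analysis ``follows precisely the proof of Lemma~\ref{lem:full}'' with the per-block waiting time $e\mu n^k$ replaced by $e\mu n^k\min\{1,4e\mu/2^k\}$ from the valley-crossing lemma, using Lemma~\ref{lem:sursu} in place of Lemma~\ref{lem:sur}. You have merely spelled out in more detail the identification of the level-$(j-1)$ witness $u'$ with $|u'_{B_i}|_1=k$ and the reduction of the all-ones block case to the all-zeros case via the block-complementation symmetry, points the paper leaves implicit.
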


Combining Lemmas~\ref{lem:innerspu} and~\ref{lem:fullspu}, we have the runtime of the \SMS with stochastic population update in the following theorem. It includes the case of two objectives, and then, with $\overline M = M = (n-2k+3)$, improves the previous result for this case, Theorem~2 in~\cite{BianZLQ23}, in particular by replacing the $2^{k/4}$ term by~$2^k$. 

\begin{theorem}
Let $k\le n'/2$. Consider using \SMS with $\mu\ge 2\overline{M}+1$ and stochastic population update to optimize the \mojzj problem. Then after at most $e\mu (mk/2)^k (1+\ln m)+{en\mu (n'-2k+1)^{m/2}+e\mu((n'-2k+3)^{m/2}-(n'-2k+1)^{m/2}) n^k \min\{1,4e\mu / 2^k\}} = O(\mu M n^k \min\{1,4e\mu / 2^k\})$ iterations in expectation, the full Pareto front is covered.
\label{thm:spu}
\end{theorem}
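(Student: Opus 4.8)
The plan is to simply assemble the bound from the three stages prepared in Lemmas~\ref{lem:innerspu} and~\ref{lem:fullspu} and then simplify the resulting sum. I would split the run of the \SMS into three consecutive phases: phase~(i) lasts until the population first contains an inner Pareto optimum, phase~(ii) starts from such a population and lasts until all inner Pareto front points are covered, and phase~(iii) starts from a population covering all inner Pareto front points and lasts until the full Pareto front is covered. By the first item of Lemma~\ref{lem:innerspu}, the expected length of phase~(i) is at most $e\mu(mk/2)^k(1+\ln m)$; by the second item, the expected length of phase~(ii) is at most $en\mu(n'-2k+1)^{m/2}$; and by Lemma~\ref{lem:fullspu}, the expected length of phase~(iii) is at most $e\mu((n'-2k+3)^{m/2}-(n'-2k+1)^{m/2})n^k\min\{1,4e\mu/2^k\}$. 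Adding these gives the explicit bound in the statement.

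Before adding, I would justify that the three phases really occur in this order and that their expected lengths add. This is where the hypothesis $\mu\ge 2\overline M+1$ is used: Lemma~\ref{lem:sursu} then guarantees that once a Pareto-optimal objective value occurs in some combined population $R_t$, it is present in every later parent population. Hence ``the population contains an inner Pareto optimum'' and ``the population covers all inner Pareto front points'' are absorbing events, so phase~(ii) can only begin once phase~(i) has ended and phase~(iii) once phase~(ii) has ended. Since Lemmas~\ref{lem:innerspu} and~\ref{lem:fullspu} bound the conditional expected phase lengths uniformly over all admissible starting states of that phase, the tower property (linearity of expectation) yields that the expected time to cover the full Pareto front is at most the sum of the three bounds.

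The remaining work is the asymptotic simplification to $O(\mu M n^k\min\{1,4e\mu/2^k\})$ with $M=(n'-2k+3)^{m/2}$. The phase-(iii) term already has this shape, so I only need to absorb the first two terms, which I would do by a short case distinction on the minimum. If $2^k\le 4e\mu$, the minimum equals $1$, and it is enough to note that $k\le n'/2=n/m$ gives $(mk/2)^k\le n^k$, that $(n'-2k+1)^{m/2}\le M$, and that $1+\ln m\le 2^{m/2}\le M$ (the latter since $n'-2k+3\ge 3$, using $k\le n'/2$), whence both terms are $O(\mu M n^k)$. If $2^k>4e\mu$, the minimum equals $4e\mu/2^k$, which forces $k$ to be at least a moderate constant; then I would check $e\mu(mk/2)^k(1+\ln m)\le e\mu M n^k\cdot(4e\mu/2^k)$ via $(mk)^k=2^k(mk/2)^k\le n^k$ and $1+\ln m\le M$, and $en\mu(n'-2k+1)^{m/2}\le en\mu M\le e\mu M n^k\cdot(4e\mu/2^k)$ via $n^{k-1}\ge 2^{k-1}$ for $n\ge 2$. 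Taking the maximum of the two cases gives the claimed $O(\cdot)$.

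The only real obstacle is this last step: when $4e\mu/2^k$ is the binding part of the minimum (so that the stochastic update yields a genuine speed-up), one must make sure the ``cheap'' phases~(i) and~(ii) are not what dominates, which rests precisely on the assumption $k\le n'/2$ and on the fact that a small value of the minimum forces $k$ to be large. Everything else is bookkeeping. Finally, I would append the observation that for $m=2$, where $\overline M=M=n-2k+3$, this recovers Theorem~2 of~\cite{BianZLQ23} with the improved factor $2^{-k}$ (up to constants) replacing the previous $2^{-k/4}$.
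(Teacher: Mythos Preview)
Your proposal is correct and matches the paper's approach: the theorem is obtained simply by summing the bounds of Lemmas~\ref{lem:innerspu} and~\ref{lem:fullspu}, and the paper states exactly this (``Combining Lemmas~\ref{lem:innerspu} and~\ref{lem:fullspu}'') without spelling out the $O(\cdot)$ simplification. Your case analysis on $\min\{1,4e\mu/2^k\}$ to absorb the phase-(i) and phase-(ii) terms is more detailed than anything the paper provides here, but it is sound (in particular, $(mk/2)^k\le (n/2)^k$ from $k\le n/m$, and $M\ge 3^{m/2}\ge 1+\ln m$ from $n'-2k+3\ge 3$ give what you need in both cases).
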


Comparing the runtime guarantees of Theorem~\ref{thm:sms} (classic \SMS) and
Theorem~\ref{thm:spu} (\SMS with stochastic selection), we see that stochastic selection can at most lead to a speed-up by a factor of order $2^{k} / \mu$. Now $\mu$ is at least $\Omega(M) = \Omega((2n/m-2k+3)^{m/2})$. Consequently, the advantage of stochastic selection is rapidly decreasing with growing numbers of objectives and vanishes, e.g., when $m \ge k$. 
%\merk{(WJ: We can still say so (when $m\ge k$) as $(2n/m-2k+3)^{1/2} \ge 2$ if $k\le n'/2-1=n/m-1$.)}

\section{Heavy-Tailed Mutation Helps}
\label{sec:htm}

In the previous section, we saw that the advantage of the stochastic population update does not generalize well from bi-objective to many-objective optimization. We now regard another design choice that so far was only analyzed in bi-objective optimization, namely a heavy-tailed mutation operator. For this, we shall prove that the $k^{\Omega(k)}$ factor speed-up observed in bi-objective optimization extends to many objectives.
%shows that the stochastic population update is not that efficient from two objectives to many objectives. The heavy-tailed mutation is proven to achieve better performance in single-objective and bi-objective optimization for the multimodal \jump function. This section will discuss whether the heavy-tailed mutation also results in a good speed-up for many-objective \jump function.

\subsection{Heavy-Tailed Mutation}\label{ssec:htm}
Different from the standard bit-wise mutation operator, which flips each bit independently with probability $1/n$, the heavy-tailed mutation operator proposed in \cite{DoerrLMN17} flips each bit independently with probability $\alpha/n$, where $\alpha$ follows a power-law distribution with parameter $\beta$. The number $\alpha$ is sampled anew in each application of the heavy-tailed mutation operator. The underlying power-law is defined as follows.

\begin{definition}
Let $n\in\N$ and $\beta >1$. We say that $\alpha$ follows a power-law distribution with (negative) exponent $\beta$ if for all $i\in[1..n/2]$, we have $\Pr[\alpha=i]=\left(C_{n/2}^{\beta}\right)^{-1}i^{-\beta}$, where $C_{n/2}^{\beta}=\sum_{j=1}^{n/2}j^{-\beta}$.
\end{definition}

The power-law in the choice of $\alpha$ extends to the offspring distribution in the way that an offspring has Hamming distance $j$ from the parent with probability $\Omega(j^{-\beta})$~\cite{DoerrLMN17}. This facilitates larger jumps in the search space, and thus escaping from a local optimum. We recall that for standard bit-wise mutation, the offspring has Hamming distance $j$ from the parent with probability is $j^{-\Omega(j)}$ only.

Since this observation was formulated in~\cite[Lemma~5.1]{DoerrLMN17} in a slightly impractical manner (with implicit constants independent of~$\beta$, but only applying to distances in $(\beta-1,n/2]\cup\{1\}$), we first give a formulation that is more useful for most purposes. It treats $\beta$ as a constant, which makes a lot of sense in the light of all previous results, which gave the best performances for constant values of $\beta$. %Lemma~\ref{lem:proht} gives the formal result, which will be useful for further research.
\begin{lemma}
Let $\beta > 1$. Then there is a $c > 0$ such that the following holds. Let $n \in \N$. Let $x\in\{0,1\}^n$ and $y$ be generated by applying the heavy-tailed mutation (with parameter $\beta$) to $x$. Let $H(x,y)$ be the Hamming distance between $x$ and $y$, let $j \in [1..n/2]$, and let $P_j^{\beta}:=P_j^\beta(n) := \Pr[H(x,y)=j]$. Then 
\[P_j^{\beta} \ge c j^{-{\beta}}.\]
\label{lem:proht}
\end{lemma}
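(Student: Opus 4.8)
To establish the claimed bound, the plan is to reduce the statement to the estimate of Doerr, Le, Makhmara, and Nguyen~\cite[Lemma~5.1]{DoerrLMN17}. That lemma already shows $P_j^\beta = \Omega(j^{-\beta})$, with an implied constant that does not depend on~$\beta$, but it is stated only for Hamming distances $j \in \{1\} \cup ((\beta-1,n/2] \cap \N)$. Since here $\beta$ is treated as a constant, the indices $j \in [1..n/2]$ \emph{not} covered by this range form a subset of $\{j \in \N : 2 \le j \le \beta-1\}$, which is finite. Hence it suffices to prove, for each such exceptional value of~$j$ and all $n$ with $n \ge 2j$ (that is, all $n$ for which $j \le n/2$), a bound $P_j^\beta \ge c_j j^{-\beta}$ with $c_j>0$; the constant $c$ claimed in the lemma is then obtained as the minimum of the $\beta$-independent constant from~\cite{DoerrLMN17} and these finitely many~$c_j$, i.e., a minimum over a finite set of positive reals.

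For a fixed exceptional~$j$, I would lower-bound $P_j^\beta$ by the probability of the single event that the heavy-tailed operator samples $\alpha = j$ and then flips exactly~$j$ bits, i.e., $P_j^\beta \ge \Pr[\alpha = j] \cdot \binom{n}{j}(j/n)^{j}(1-j/n)^{n-j}$. Here $\Pr[\alpha = j] = (C_{n/2}^{\beta})^{-1} j^{-\beta} \ge (\sum_{i=1}^{\infty} i^{-\beta})^{-1} j^{-\beta}$, where the series converges because $\beta>1$, so this factor is $\Omega(j^{-\beta})$ with an $n$-independent constant. The remaining factor $q_{n,j} := \Pr[\Bin(n,j/n)=j] = \binom{n}{j}(j/n)^{j}(1-j/n)^{n-j}$ is strictly positive for every $n \ge 2j$ and converges to $j^{j} e^{-j}/j! > 0$ as $n \to \infty$; hence $\inf_{n \ge 2j} q_{n,j} =: q_j > 0$. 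Combining the two factors yields $P_j^\beta \ge q_j (\sum_{i=1}^{\infty} i^{-\beta})^{-1} j^{-\beta}$, the desired bound with $c_j := q_j (\sum_{i=1}^{\infty} i^{-\beta})^{-1}$.

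Putting the pieces together gives $P_j^\beta \ge c\, j^{-\beta}$ for all~$n$ and all $j \in [1..n/2]$. The argument has no genuinely hard step; the one point deserving care is the claim $\inf_{n \ge 2j} q_{n,j} > 0$, which I would justify by the elementary fact that a positive function on the integers $n \ge 2j$ having a positive limit as $n \to \infty$ has a positive infimum, rather than through an explicit estimate. If one worries that~\cite[Lemma~5.1]{DoerrLMN17} is only asserted for~$n$ beyond some threshold~$n_0$, the same positivity-plus-finite-minimum trick also absorbs the finitely many pairs $(n,j)$ with $n < n_0$. Alternatively, the lemma can be reproved from scratch by summing the contributions of $\alpha \in [\,j - \Theta(\sqrt j)\,..\,j\,]$ and using that for each such~$\alpha$ one has $\Pr[\Bin(n,\alpha/n)=j] = \Omega(1/\sqrt j)$; this avoids citing the awkward form of~\cite[Lemma~5.1]{DoerrLMN17} but requires a binomial anti-concentration estimate and is more computational.
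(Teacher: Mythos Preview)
Your proposal is correct and shares the paper's overall structure: both reduce to \cite[Lemma~5.1]{DoerrLMN17} for $j \in \{1\} \cup (\beta-1,n/2]$ and treat the finitely many exceptional values $j \in (1,\beta-1]$ separately, taking a finite minimum to obtain~$c$. The difference lies in how the exceptional $j$ are handled. The paper invokes \cite[Lemma~16]{ZhengD23ecj}, which gives an explicit constant $\frac{\beta-1}{2\sqrt{2\pi}\,e^{8\sqrt 2+13}\beta}$ but requires the side condition $j \le \sqrt n$; this forces the paper to carve out the case $\beta-1 > \sqrt n$ (finitely many~$n$) by a second finite-minimum step. Your argument instead lower-bounds $P_j^\beta$ by the single event $\{\alpha=j\}\cap\{\Bin(n,j/n)=j\}$ and uses that $\inf_{n\ge 2j}\Pr[\Bin(n,j/n)=j]>0$ via the Poisson limit; this is self-contained and avoids the extra case split, at the cost of a non-explicit constant. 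Both routes are short, and since the lemma is only ever used qualitatively (no explicit~$c$ is needed downstream), your compactness argument is a perfectly acceptable and slightly cleaner alternative.
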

\begin{proof}
%\change{Note that although $\beta$ is a constant, $\beta-1>\sqrt n$ can still happen for specific values of $n$.} 
If $\beta-1 > \sqrt{n}$, then any $c>0$ such that $c \le \min\{P_j^\beta(n) \mid n \in [1..(\beta-1)^2], j \in [1..n/2]\}$ trivially satisfied our claim. Hence, in the following, let $\beta-1 \le \sqrt n$.

If $j\in(\beta-1,n/2]\cup\{1\}$, then \cite[Lemma~5.1~(1) and (3)]{DoerrLMN17} gives our result. If $j\in(1,\beta-1]$, then $j \le \beta-1 \le \sqrt{n}$ and we conclude from \cite[Lemma~16]{ZhengD23ecj} that $P_j^{\beta} \ge \frac{\beta-1}{2\sqrt{2\pi}e^{8\sqrt2 + 13}\beta}j^{-\beta}$. Hence $c = \frac{\beta-1}{2\sqrt{2\pi}e^{8\sqrt2 + 13}\beta}$ suffices in this case.
\end{proof}

The heavy-tailed mutation operator has resulted in asymptotic performance gains by a factor of $k^{\Omega(k)}$ for the \oea optimizing single-objective (classic and generalized) \jump functions with gap size $k$~\cite{DoerrLMN17,BamburyBD24}. For the bi-objective \ojzj benchmark, again a speed-up of $k^{\Omega(k)}$ was proven when optimized via the GSEMO~\cite{ZhengD23ecj} and the \NSGA~\cite{DoerrQ23tec}. Several other positive theoretical results exist for this heavy-tailed mutation, or more generally, other heavy-tailed parameter choices~\cite{FriedrichQW18,WuQT18,QuinzanGWF21,CorusOY21tec,DangELQ22,AntipovBD22,DoerrR23,DoerrGI24}. 

Prior to this work, no theoretical analysis of the heavy-tailed mutation operator in many-objective optimization existed.

%We now equip the \SMS with heavy-tailed mutation, by simply replacing the standard bit-wise mutation in the original \SMS with it, and do not change other parts. See the complete algorithm description in Algorithm~\ref{alg:sms-htm}.
%
%\begin{algorithm}
%\caption{SMS-EMOA with Heavy-Tailed Mutation}\label{alg:sms-htm}
%\begin{algorithmic}[1]
%\STATE Initialize $P_0$ by generating $\mu$ solutions uniformly at random from $\{0,1\}^n$
%\FOR {$t=0,1,2,\dots,$} 
%\STATE Select a solution $x$ uniformly at random from $P_t$
%\STATE Generate $x'$ by applying heavy-tailed mutation to $x$: sample an $\alpha$ from the power-law distribution with parameter $\beta$, and flip each bit in $x$ with probability $\alpha/n$ independently 
%\STATEx ~\textit{~~\%\%~Original Population Update}
%\STATE Use fast-non-dominated-sort()~\cite{DebPAM02} to divide $R_t=P_t\cup \{x'\}$ into $F_1,…,F_{i^*}$ \label{stp:fronts}
%\STATE Calculate $\Delta_r(z,F_{i^*})$ for all $z\in F_{i^*}$ and find $D=\arg\min_{z\in F_{i^*}}{\Delta_r(z,F_{i^*})}$
%\STATE Uniformly at random pick $z'\in D$ and $P_{t+1}=R_t\setminus \{z'\}$ \label{stp:rm}
%\ENDFOR
%\end{algorithmic}
%\end{algorithm}

\subsection{Runtime}

We now analyze the runtime of the \SMS using the heavy-tailed mutation operator instead of bit-wise mutation, the standard choice for this algorithm.

Since the heavy-tailed mutation does not change the survival selection of the original \SMS, we immediately have the following result on the survival of the individuals.
\begin{corollary}
The assertion of Lemma~\ref{lem:sur} extends to the \SMS with heavy-tailed mutation.
%and with $\mu\ge M$ to optimize the \mojzj problem. For any $x\in F_1$ for the current generation $t$, we have that there is $y$ with $f(y)=f(x)$ in $P_{t+1}$. Specifically, any Pareto front point will be maintained in all future populations once it is reached.
\label{lem:surH}
\end{corollary}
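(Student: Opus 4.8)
The plan is to observe that the proof of Lemma~\ref{lem:sur} never uses any property of how the offspring $x'$ is generated; it relies exclusively on the survival selection step of the \SMS. So the proof of the corollary will consist in revisiting that argument and checking that each step is agnostic to the mutation operator. Concretely, I would recall the three ingredients: (i) the reduction to $x \in F_1$ via a pure domination argument, which only needs that $F_1$ is the first front of the non-dominated sorting of $R_t = P_t \cup \{x'\}$; (ii) the case $|F_1| \le \mu$, where all of $F_1$ survives into $P_{t+1}$; and (iii) the case $|F_1| = \mu+1$, where the assumption $\mu \ge \overline M$ together with the pigeon-hole principle forces two individuals in $F_1$ with identical objective value, so that the individual $z'$ removed has hypervolume contribution zero and is duplicated by some $z'' \in R_t$ with $f(z') = f(z'')$. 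None of (i)--(iii) cares whether $x'$ came from standard bit-wise mutation or from heavy-tailed mutation.

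The only formal point to make explicit is that the \SMS with heavy-tailed mutation differs from Algorithm~\ref{alg:sms-emoa} solely in line~4 (the mutation step), whereas the non-dominated sorting, the computation of the hypervolume contributions, and the removal of a minimum-contribution individual (lines~5--7) are unchanged. Since both the hypotheses of Lemma~\ref{lem:sur} (population size $\mu \ge \overline M$, reference point $r$ with $\HV_r(\{x\}) > 0$ for every $x \in \{0,1\}^n$) and its conclusion (some $y \in P_s$ with $y \succeq x$ for all $s > t$, and in particular Pareto-optimal objective values are never lost) refer only to $P_t$, $R_t$, and the survival selection, the original proof applies verbatim. In fact, one can state the proof of the corollary in a single sentence: the proof of Lemma~\ref{lem:sur} uses no property of the mutation operator, hence carries over unchanged.

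There is no genuine obstacle here; the only thing to be careful about is the bookkeeping of confirming that no step of the original proof implicitly invoked the specific form of bit-wise mutation (for instance, in the domination reduction or in the characterization of zero-contribution individuals). Since the proof of Lemma~\ref{lem:sur} is self-contained and concerns only the selection mechanism of the \SMS, this check is immediate, and the same remark will later let us reuse Lemmas~\ref{lem:1stinner}--\ref{lem:full} in the heavy-tailed setting after replacing the elementary mutation-probability estimates by their heavy-tailed analogues from Lemma~\ref{lem:proht}.
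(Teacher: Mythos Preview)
Your proposal is correct and matches the paper's approach exactly: the paper simply observes (in the sentence preceding the corollary) that heavy-tailed mutation does not change the survival selection of the \SMS, so Lemma~\ref{lem:sur} carries over immediately. Your write-up is more detailed than necessary, but the content is the same.
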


With proof ideas similar to those in Lemma~\ref{lem:1stinner} to~\ref{lem:full}, we prove the following three runtime estimates. The main differences are the different probabilities for generating a certain solution. However, these require somewhat different computations in the following lemma, which therefore appears slightly technical. We note that we did not optimize the result here, but were content with arriving at a bound that is small compared to the time needed to generate the extremal Pareto optima.
% but noticing the different probabilities of generating solutions with a specific Hamming distance, we obtain the following runtime guarantees. \merk{WJ: New starts here.}
%In expected number of $C_{n/2}^{\beta} \binom{n}{k}mk^{\beta}$ iterations the population will contain at least one inner Pareto optimum. After additional at most $\frac{e\beta}{\beta-1} n\mu M$ iterations in expectation, all inner Pareto front points are covered. The full Pareto front is covered after additional at most $\binom{n}{k}C_{n/2}^{\beta}\mu M k^{\beta}$ iterations in expectation. In summary, we have the following theorem.} \merk{WJ: Note that in our very old version we made each phase as a lemma. Here we only report the results, right?}

\begin{lemma}
Let $k\le n'/2$. Consider using the \SMS with heavy-tailed mutation and with $\mu\ge \overline{M}$ to optimize the \mojzj problem. Then after at most $O\left((em/2)^k k^{\beta}\mu\ln m\right)$ (for $m\ge 4$) or $O\left(\binom{n}{k} k^{\beta}\mu/(n-k+1)\right)$ (for $m=2$) iterations in expectation, the population (and also the populations afterwards) contains at least one inner Pareto optimum. 
\label{lem:1stinnerH}
\end{lemma}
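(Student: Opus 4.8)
The plan is to reuse the skeleton of the proof of Lemma~\ref{lem:1stinner}, substituting Corollary~\ref{lem:surH} for Lemma~\ref{lem:sur} and re-deriving only the single step that bounds the probability of a favourable mutation. If the initial population already contains an inner Pareto optimum we are done; otherwise we track $L(P)=\max\{L(x)\mid x\in P\}$ with $L(x)=|\{i\in[1..m/2]\mid |x_{B_i}|_1\in[k..n'-k]\}|$. The argument that $L(P)$ never decreases carries over word for word: it uses only the domination structure of \mojzj together with the survival guarantee, which is now supplied by Corollary~\ref{lem:surH}; in particular, once $L(P)=m/2$ this value stays there forever, which yields the ``and also the populations afterwards'' clause.

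Next I would bound the probability that $L(P)$ increases in one iteration. Fix $x\in P$ with $L(x)=L(P)<m/2$ (selected as parent with probability $1/\mu$) and a block $B_i$ with $i\notin I(x)$; after possibly complementing the block we may assume $|x_{B_i}|_1=a\in[0..k-1]$. It suffices that heavy-tailed mutation flips exactly $j:=k-a$ of the $n'-a$ zeros in $B_i$ and no other bit. By Lemma~\ref{lem:proht} the probability of flipping a prescribed $j$-set and nothing else is $P_j^\beta/\binom nj\ge c\,j^{-\beta}/\binom nj$, and there are $\binom{n'-a}{j}$ disjoint such sets inside $B_i$, so the success probability for block $B_i$ is at least $c\,j^{-\beta}\binom{n'-a}{j}/\binom nj$. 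Since the analogous events for different non-inner blocks are disjoint, the probability that $L(P)$ increases in one iteration is at least $\tfrac1\mu(m/2-L(P))$ times a uniform lower bound on this per-block quantity, and an expected-waiting-time estimate together with $\sum_{L=0}^{m/2-1}1/(m/2-L)=O(\ln m)$ (for $m\ge4$; a single term for $m=2$) then finishes the computation exactly as in Lemma~\ref{lem:1stinner}.

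The technical heart --- flagged already in the lemma statement --- is to bound $c\,j^{-\beta}\binom{n'-a}{j}/\binom nj$ from below uniformly over $a\in[0..k-1]$. For $m\ge4$ I would use $\binom{n'-a}{j}\ge((n'-a)/j)^j$ and $\binom nj\le(en/j)^j$ to get $\binom{n'-a}{j}/\binom nj\ge((n'-a)/(en))^j$, and then check, via $\ln(en/(n'-a))\ge\ln(em/2)>1>(k-a)/(n'-a)$ (which uses $n'=2n/m$ and $k\le n'/2$), that $a\mapsto(k-a)^{-\beta}((n'-a)/(en))^{k-a}$ is increasing, hence minimised at $a=0$, where it equals $k^{-\beta}(n'/(en))^k=k^{-\beta}(em/2)^{-k}$; this gives the $O((em/2)^kk^\beta\mu\ln m)$ bound. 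For $m=2$ we have $n'=n$ and a single block, so no harmonic sum is needed, and I would instead use the crude estimates $\binom{n-a}{k-a}\ge n-k+1$ (valid for all $a\in[0..k-1]$, since $\binom{n-a}{k-a}$ is strictly decreasing in $a$) and $\binom n{k-a}\le\binom nk$, giving per-iteration success probability at least $c(n-k+1)k^{-\beta}/\binom nk$ and hence the stated $O(\binom nk k^\beta\mu/(n-k+1))$ bound. The main obstacle is precisely this uniform-over-$a$ estimate for $m\ge4$: one has to verify that the bottleneck is flipping the larger number $k$ of bits out of an all-zero block rather than a few bits out of a nearly-inner block, which is exactly what the monotonicity computation establishes.
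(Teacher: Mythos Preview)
Your proposal is correct and follows essentially the same approach as the paper: reuse the skeleton of Lemma~\ref{lem:1stinner} with the survival guarantee replaced by Corollary~\ref{lem:surH}, bound the per-block mutation success probability via Lemma~\ref{lem:proht} as $\Omega\big((k-a)^{-\beta}\binom{n'-a}{k-a}/\binom{n}{k-a}\big)$, show this is minimized at $a=0$ for $m\ge4$, and handle $m=2$ with the crude estimates $\binom{n-a}{k-a}\ge n-k+1$ and $\binom{n}{k-a}\le\binom{n}{k}$. The only minor technical difference is in how the $m\ge4$ monotonicity is verified: the paper applies a ratio test directly to $\binom{n'-j}{k-j}/(\binom{n}{k-j}(k-j)^\beta)$, whereas you first pass to the simpler lower bound $(k-a)^{-\beta}((n'-a)/(en))^{k-a}$ and then check monotonicity via the derivative inequality $\ln(en/(n'-a))\ge\ln(em/2)>1>(k-a)/(n'-a)$; both routes yield the same bottleneck value $k^{-\beta}(2/(em))^k$.
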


\begin{proof}
We use the same general as in Lemma~\ref{lem:1stinner} except for the following probability estimates. 
From Lemma~\ref{lem:proht}, we know that the probability to generate a $y$ with $|y_{B_i}|_1 \in [k..n'-k]$ from $x$ with $|x_{B_{i}}|_1 \in [0..k-1]$ is at least
\begin{align}
\Omega\left(\frac{\binom{n'-|x_{B_i}|_1}{k-|x_{B_i}|_1}(k-|x_{B_i}|_1)^{-\beta}}{\binom{n}{k-|x_{B_i}|_1}}\right). 
%\frac{\binom{n'-|x_{B_i}|_1}{k-|x_{B_i}|_1}}{\binom{n}{k-|x_{B_i}|_1}}\frac{(k-|x_{B_i}|_1)^{-\beta}}{C_{n/2}^{\beta}}. 
%
%> \frac{\binom{n'}{k}}{\binom{n}{k}}\frac{k^{-\beta}}{C_{n/2}^{\beta}} \ge \left(\frac{n'}{en}\right)^k \frac{k^{-\beta}}{C_{n/2}^{\beta}}=\left(\frac{2}{em}\right)^k \frac{k^{-\beta}}{C_{n/2}^{\beta}}
\label{eq:lbf}
\end{align}
Let first $m\ge 4$. For all $j \in [0..k-1]$, we compute
\begin{equation}
\begin{split}
&\hspace*{-0.4cm}\left(\frac{\binom{n'-(j+1)}{k-(j+1)}}{\binom{n}{k-(j+1)}(k-(j+1))^{\beta}}\right)\bigg/\left(\frac{\binom{n'-j}{k-j}}{\binom{n}{k-j}(k-j)^{\beta}}\right)\\
={}&{}\left(\frac{(n'-(j+1))!(k-(j+1))!(n-k+j+1)!}{(k-(j+1))!(n'-k)!n!(k-(j+1))^{\beta}}\right)\\
&{}\bigg/\left(\frac{(n'-j)!(k-j)!(n-k+j)!}{(k-j)!(n'-k)!n!(k-j)^{\beta}}\right)\\
={}&{}\frac{(n'-(j+1))!(n-k+j+1)!(k-j)^{\beta}}{(n'-j)!(n-k+j)!(k-(j+1))^{\beta}}
=\frac{n-k+j+1}{n'-j}\left(\frac{k-j}{k-(j+1)}\right)^{\beta}\\
\ge{}&{} \frac{n-k+1}{n'} \ge \frac{mn'/2-n'/2+1}{n'} >1,
\end{split}
\label{eq:estp}
\end{equation}
where the antepenultimate inequality uses $j\ge 0$, and the last inequality uses $m\ge 4$. Hence the expression in~\eqref{eq:lbf} is smallest for $|x_{B_i}|_1 = 0$, and hence at least of the order of 
\begin{align}
\frac{\binom{n'}{k}}{\binom{n}{k}} k^{-\beta} \ge \left(\frac{n'}{en}\right)^k k^{-\beta}=\left(\frac{2}{em}\right)^k k^{-\beta}.
\label{eq:m4}
\end{align}
%and thus $(\ref{eq:lbf})=\Omega\left(\left(\frac{2}{em}\right)^k k^{-\beta}\right)$.
For $m=2$, we have $n=n'$. Since 
\[
\binom{n-j}{k-j}=\binom{n-j}{n-k} \ge \binom{n-(k-1)}{n-k}=n-k+1
\]
for $j\in[0..k-1]$, and
\begin{align}
\frac{n-k+1}{\binom{n}{k-|x_{B_i}|_1}}(k-|x_{B_i}|_1)^{-\beta} \ge \frac{n-k+1}{\binom{n}{k}}k^{-\beta},
\label{eq:m2}
\end{align}
we can estimate the expression in~\eqref{eq:lbf} by $\Omega\left((n-k+1)k^{-\beta}/\binom{n}{k}\right)$.

Replacing (\ref{eq:lbpf}) by (\ref{eq:m4}) or (\ref{eq:m2}) in the proof of Lemma~\ref{lem:1stinner}, we obtain that the expected number of iterations to reach an inner Pareto optimum is $O\left((em/2)^k k^{\beta}\mu\ln m\right)$ for $m\ge 4$ and is $O\left(\binom{n}{k} k^{\beta}\mu/(n-k+1)\right)$ for $m=2$.
\end{proof}

\begin{lemma}
Let $k\le n'/2$. Consider using the \SMS with heavy-tailed mutation and with $\mu\ge \overline{M}$ to optimize the \mojzj problem. Assume that the current population contains at least one inner Pareto optimum. Then after at most $\frac{e\beta}{\beta-1} n\mu (n'-2k+1)^{m/2}$ iterations in expectation, all inner Pareto front points are covered.
\label{lem:allinnerH}
\end{lemma}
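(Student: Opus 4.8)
The plan is to follow the template of Lemma~\ref{lem:allinner} almost verbatim, replacing only the probability estimate for generating a desired neighboring Pareto front point by the corresponding lower bound for heavy-tailed mutation. First I would recall from Corollary~\ref{lem:surH} that Lemma~\ref{lem:sur} carries over to the heavy-tailed variant, so once a Pareto-optimal objective value is present in the population it is never lost. Next I would argue, exactly as in the proof of Lemma~\ref{lem:allinner}, that as long as not all inner Pareto front points are covered, there is an uncovered inner Pareto front point $u$ that is a neighbor of a point already covered by the population; hence $u$ can be produced from some parent $x$ in the population by flipping a single specific bit and keeping all others unchanged.

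The one computation to redo is the probability of this event under heavy-tailed mutation. Selecting the right parent has probability $1/\mu$. The probability that heavy-tailed mutation flips exactly the one desired bit is $\sum_{\alpha=1}^{n/2} \Pr[\alpha=i]\,\frac{\alpha}{n}\left(1-\frac{\alpha}{n}\right)^{n-1}$, which I would bound from below simply by restricting to $\alpha=1$: this contributes $\Pr[\alpha=1]\cdot\frac1n\left(1-\frac1n\right)^{n-1} \ge \frac{1}{C_{n/2}^\beta}\cdot\frac{1}{en}$, and since $C_{n/2}^\beta = \sum_{j=1}^{n/2} j^{-\beta} \le \frac{\beta}{\beta-1}$ for $\beta>1$ (bounding the sum by $1+\int_1^\infty t^{-\beta}\,dt$), this is at least $\frac{\beta-1}{\beta}\cdot\frac{1}{en}$. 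Thus the probability to generate $u$ in one iteration is at least $\frac{\beta-1}{e\beta n\mu}$, so the expected waiting time for one new inner Pareto front point is at most $\frac{e\beta}{\beta-1}n\mu$.

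Finally I would combine this with the survival guarantee and the bound of at most $(n'-2k+1)^{m/2}$ inner Pareto front points, exactly as in Lemma~\ref{lem:allinner}: summing the waiting times over the at most $(n'-2k+1)^{m/2}$ points to be covered gives the claimed bound of $\frac{e\beta}{\beta-1}n\mu(n'-2k+1)^{m/2}$ expected iterations.

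I do not expect any real obstacle here; the proof is essentially identical in structure to that of Lemma~\ref{lem:allinner}, and the only new ingredient, the constant-factor bound $C_{n/2}^\beta \le \beta/(\beta-1)$, is elementary. The mild subtlety worth stating explicitly is just that restricting the heavy-tailed mutation to the event $\alpha=1$ is enough here because we only need to flip a single bit, so no appeal to Lemma~\ref{lem:proht} in its full generality is needed; the more delicate use of Lemma~\ref{lem:proht} was only required in Lemma~\ref{lem:1stinnerH}, where $k$ bits had to be flipped.
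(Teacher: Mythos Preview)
Your proposal is correct and follows essentially the same approach as the paper: reuse the argument of Lemma~\ref{lem:allinner}, replace only the one-bit mutation probability by the heavy-tailed estimate, and arrive at $\frac{\beta-1}{e\beta n\mu}$. The sole cosmetic difference is that the paper obtains this bound by invoking $P_1^\beta \ge \frac{\beta-1}{e\beta}$ from~\cite[Lemma~16]{ZhengD23ecj}, whereas you derive it inline by restricting to $\alpha=1$ and bounding $C_{n/2}^\beta \le \beta/(\beta-1)$; your version is slightly more self-contained but otherwise identical.
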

\begin{proof}
We use similar arguments as in the proof of Lemma~\ref{lem:allinner} except for the following probability calculation. The probability to generate an offspring with function value equal to the one of an uncovered inner Pareto front point from a neighbor covered by the population is at least $\frac{1}{n\mu}P_1^{\beta}\ge \frac{\beta-1}{e\beta n\mu}$, where $P_1^{\beta} \ge \frac{\beta-1}{e\beta}$ is from~\cite[Lemma~16]{ZhengD23ecj}.
\end{proof}

\begin{lemma}
Let $k\le n'/2$. Consider using the \SMS with heavy-tailed mutation and with $\mu\ge \overline{M}$ to optimize the \mojzj problem. Assume that the current population covers all inner Pareto front points. Then after at most $\binom{n}{k}C_{n/2}^{\beta}\mu  k^{\beta}((n'-2k+3)^{m/2}-(n'-2k+1)^{m/2})$ iterations in expectation, the full Pareto front is covered.
\label{lem:fullH}
\end{lemma}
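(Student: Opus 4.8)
The plan is to follow the proof of Lemma~\ref{lem:full} essentially line by line, replacing only the per‑iteration success probability of standard bit‑wise mutation by the corresponding quantity for heavy‑tailed mutation. First I would reuse the same decomposition of the Pareto front into the $m/2+1$ levels $L_0,\dots,L_{m/2}$, where $L_j$ collects the objective vectors of the Pareto optima having exactly $j$ extreme blocks. By assumption all of $L_0$ (the inner Pareto front) is covered, and by Corollary~\ref{lem:surH} every covered objective value stays covered forever, so it suffices to process the levels $j=1,2,\dots,m/2$ in increasing order and, for each of them, bound the expected time to cover one point $v\in L_j$ under the assumption that all of $L_{j-1}$ is already covered.

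For a fixed target $v\in L_j$, pick one of its extreme blocks, say $B_i$, and let $u$ be the Pareto optimum agreeing with $v$ outside $B_i$ whose block $B_i$ has exactly $k$ ones if $v_{B_i}=0^{n'}$ (resp.\ exactly $n'-k$ ones if $v_{B_i}=1^{n'}$). Then $u$ has $j-1$ extreme blocks, so $f(u)\in L_{j-1}$ is covered; moreover any individual $\tilde u$ in the population with $f(\tilde u)=f(u)$ necessarily has exactly $k$ (resp.\ $n'-k$) ones in block $B_i$, because $f_{2i-1}(\tilde u)=f_{2i-1}(u)$ forces this through the definition of $\jump_{n',k}$. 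Hence flipping exactly the $k$ ones (resp.\ $k$ zeros) of $\tilde u_{B_i}$, and nothing else, turns $\tilde u$ into a bit‑string $v^{*}$ at Hamming distance exactly $k$ from $\tilde u$ with $f(v^{*})=v$. It then remains to lower‑bound, for the selected parent $\tilde u$, the probability that heavy‑tailed mutation produces this prescribed distance‑$k$ string $v^{*}$. Conditioned on the offspring being at Hamming distance $k$, the flipped bit set is uniform among all $\binom nk$ possibilities, so this probability equals $P_k^{\beta}/\binom nk$, which by Lemma~\ref{lem:proht} (respectively by restricting to the event $\alpha=k$ and a short elementary calculation as in~\cite{DoerrLMN17}) is at least $\big(\binom nk C_{n/2}^{\beta} k^{\beta}\big)^{-1}$. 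Multiplying by the $1/\mu$ probability of selecting $\tilde u$ as parent gives a per‑iteration success probability of at least $\big(\mu\binom nk C_{n/2}^{\beta} k^{\beta}\big)^{-1}$, hence an expected waiting time of at most $\mu\binom nk C_{n/2}^{\beta} k^{\beta}$ to cover $v$.

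Summing this bound over all non‑inner Pareto front points and using that there are at most $(n'-2k+3)^{m/2}-(n'-2k+1)^{m/2}$ of them (together with Corollary~\ref{lem:surH} ensuring lower levels stay covered while we work on higher ones) yields the claimed total of $\binom nk C_{n/2}^{\beta}\mu k^{\beta}\big((n'-2k+3)^{m/2}-(n'-2k+1)^{m/2}\big)$ expected iterations.

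The step I expect to require the most care is the probability estimate for hitting a specific Hamming‑distance‑$k$ string under heavy‑tailed mutation without an extra loss: the naive bound "$\alpha=k$ and exactly the right $k$ bits flip" carries a spurious factor $\Pr[\Bin(n,k/n)=k]=\Theta(1/\sqrt k)$, so one has to exploit either the conditional‑uniformity observation above or a summation over a window of $\alpha$‑values around $k$ (as in~\cite{DoerrLMN17}) to arrive at the clean factor $k^{-\beta}$. Everything else — the level decomposition, the choice of the source $u$, the fact that $f(\tilde u)=f(u)$ pins down the relevant block of $\tilde u$, and the invocation of Corollary~\ref{lem:surH} — is a routine transcription of the proof of Lemma~\ref{lem:full}.
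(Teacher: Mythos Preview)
Your proposal is correct and follows essentially the same approach as the paper: reuse the level decomposition and survival argument from Lemma~\ref{lem:full}, and replace only the per-iteration success probability by $\frac{1}{\mu}\binom{n}{k}^{-1}P_k^{\beta}$, then invoke Lemma~\ref{lem:proht} for the $k^{-\beta}$ bound. Your conditional-uniformity justification for the factor $P_k^{\beta}/\binom{n}{k}$ is in fact more explicit than the paper's one-line proof, and your remark about avoiding the spurious $\Theta(1/\sqrt{k})$ loss is well taken (the paper silently relies on Lemma~\ref{lem:proht} for exactly this).
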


\begin{proof}
We reuse the arguments of Lemma~\ref{lem:full} except for the following probability calculation. From Lemma~\ref{lem:proht}, we know that the probability to generate any point in $j$-th level condition on that all $(j-1)$-th level points are covered is at least 
\begin{equation*}
\frac{1}{\mu}\binom{n}{k}^{-1}P_k^{\beta} = \Omega\left(\binom{n}{k}^{-1}\frac{1}{\mu}k^{-\beta}\right).
\qedhere
\end{equation*}
\end{proof}

Considering the whole process for the full coverage of the Pareto front from Lemmas~\ref{lem:1stinnerH} to~\ref{lem:fullH}, we have the following runtime result for the \SMS with heavy-tailed mutation (where we estimated $\binom{n}{k} \le n^k/(k!)$ and $k!\ge \sqrt{2\pi} k^{k+0.5}e^{-k}$). 
\begin{theorem}\label{thm:htm}
Let $k\le n'/2$ and $\beta >1$. Consider using \SMS with $\mu\ge \overline{M}$ and with heavy-tailed mutation to optimize the \mojzj problem. Then after at most $O(M\mu k^{\beta-0.5-k}(en)^k)$ iterations or function evaluations in expectation, the full Pareto front is covered.
\end{theorem}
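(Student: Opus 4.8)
The plan is to simply collect the three runtime estimates from Lemmas~\ref{lem:1stinnerH}, \ref{lem:allinnerH}, and~\ref{lem:fullH}, add them up, and show that the dominant term simplifies to the claimed bound $O(M\mu k^{\beta-0.5-k}(en)^k)$. First I would note that by Corollary~\ref{lem:surH} (the heavy-tailed analogue of Lemma~\ref{lem:sur}), progress is never lost, so the three phases --- reaching one inner Pareto optimum, covering all inner Pareto front points, and covering the remaining (non-inner) points --- can be analyzed sequentially and their expected times summed. This gives an expected total of at most
\begin{align*}
&O\!\left((em/2)^k k^{\beta}\mu\ln m\right) + \frac{e\beta}{\beta-1} n\mu (n'-2k+1)^{m/2} \\
&\quad{}+ \binom{n}{k}C_{n/2}^{\beta}\mu k^{\beta}\left((n'-2k+3)^{m/2}-(n'-2k+1)^{m/2}\right)
\end{align*}
iterations (for $m\ge4$; the $m=2$ case is handled by the analogous bound with the first term replaced by $O(\binom{n}{k}k^\beta\mu/(n-k+1))$).

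Next I would show that the last term dominates. Using $C_{n/2}^{\beta}=\Theta(1)$ for $\beta>1$ and the standard estimates $\binom{n}{k}\le n^k/k!$ together with $k!\ge\sqrt{2\pi}\,k^{k+0.5}e^{-k}$, one gets $\binom{n}{k}k^{\beta}=O(n^k k^{\beta-k-0.5}e^k)=O\!\left(k^{\beta-0.5-k}(en)^k\right)$, and the factor $(n'-2k+3)^{m/2}-(n'-2k+1)^{m/2}$ is at most $M=(n'-2k+3)^{m/2}$. Hence the third term is $O(M\mu k^{\beta-0.5-k}(en)^k)$. It then remains to check that the first two terms are absorbed: the second term $\frac{e\beta}{\beta-1}n\mu(n'-2k+1)^{m/2}$ is $O(n\mu M)$, which is dominated by $O(M\mu k^{\beta-0.5-k}(en)^k)$ whenever $k\ge1$ (since $(en)^k k^{-0.5-k}\cdot k^{\beta}\ge n$ already for $k=1$, up to constants, as $\beta>1$); and the first term $O((em/2)^k k^\beta\mu\ln m)$ is small relative to $M\mu k^{\beta-0.5-k}(en)^k$ because $n=mn'/2\ge m$ makes $(en)^k$ at least $(em)^k$, and $M\ge 2^{m/2}$ easily swallows the $\ln m$. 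The $m=2$ variant is checked the same way.

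The main obstacle is not really technical difficulty but bookkeeping: one must be careful that the comparison of the three terms holds \emph{uniformly} in the regime allowed by the hypotheses (i.e.\ $k\le n'/2$, arbitrary $m\ge2$, and $\mu$ possibly growing with $n$), rather than just for constant parameters. In particular one should verify that $k\le n'/2=n/m$ prevents $k$ from being so large that $(em/2)^k$ or the $k^{\beta}$ factors overwhelm the $(en)^k$ growth --- this is exactly why the lemmas are stated with the restriction $k\le n'/2$, and the key inequality $n'/n = 2/m$ combined with $k\le n'/2$ keeps everything in line. Once that uniform domination is confirmed, the theorem follows immediately, and the parenthetical remark that the fitness-evaluation count differs only by the additive $\mu$ initialization cost (which is negligible) completes the statement.
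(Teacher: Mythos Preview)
Your proposal is correct and follows essentially the same approach as the paper: the paper's proof is a one-sentence remark that the theorem follows by summing the bounds of Lemmas~\ref{lem:1stinnerH}--\ref{lem:fullH} and simplifying the dominant third term via $\binom{n}{k}\le n^k/k!$ and $k!\ge\sqrt{2\pi}\,k^{k+0.5}e^{-k}$. You spell out in more detail why the first two phase bounds are absorbed by the third, which the paper leaves implicit.
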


Compared to the runtime guarantee of Theorem~\ref{thm:sms} for the original \SMS, the guarantee of Theorem~\ref{thm:htm} above for the \SMS with heavy-tailed mutation is by a factor of asymptotically $k^{k+0.5-\beta}/e^k$ stronger. Compared to the \SMS with stochastic population update (Theorem~\ref{thm:spu}), the speed-up is by a factor of $\min\{k^{k+0.5-\beta}/e^k,4\mu k^{0.5-\beta}(k/2)^k/e^{k-1}\}$, which is super-exponential w.r.t. $k$.
%Theorem~\ref{thm:htm} shows the best asymptotic upper bound of the runtime of the \SMS with heavy-tailed mutation is $O(M^2n^k k^{\beta-0.5-k})$. Comparing with the runtime of $O(M^2n^k)$ for $\mu=\Theta(M)$ for the original \SMS from Theorem~\ref{thm:sms}, we see a speed-up of $k^{k+0.5-\beta}$, which is also the speed-up reached for the single objective and two objectives.  

\section{Runtime Analysis for Two Classic Bi-objective Benchmarks}
\label{sec:LOTZ}

In the sections above, we discussed the runtime of the \SMS for the many-objective \ojzj benchmark and showed a good performance, different from the original \NSGA. Since the only two previous theory papers~\cite{BianZLQ23,ZhengLDD24} on the \SMS merely considers its performance on the bi-objective \ojzj and \DLTB problems, to broaden our understanding of this algorithm, we now analyze its runtime for the two most prominent bi-objective benchmarks, \omm and \lotz.

\subsection{\omm and \lotz}
The \omm benchmark introduced by Giel and Lehre~\cite{GielL10} (and likewise the similar \cocz benchmark previously proposed by Laumanns, Thiele, and Zitzler~\cite{LaumannsTZ04}) are bi-objective counterparts of the famous single-objective \om benchmark. Analogously, the \lotz benchmark defined in~\cite{LaumannsTZ04} (also the weighted version WLPTNO of Qian, Yu, and Zhou~\cite{QianYZ13}) are bi-objective counterparts of the classic single-objective \lo benchmark. 

For \omm, the two objectives count the number of ones and zeros in a given bit-string, respectively. For \lotz, the first objective is the number of contiguous ones starting from the first bit position, and the second objective is the number of contiguous zeros starting from the last bit position. Both benchmarks with their simple and clear structure have greatly facilitated the understanding of how MOEAs work. We note that there are other benchmarks used in the mathematical analysis of MOEAs, see, e.g.,~\cite{HorobaN08,BrockhoffFHKNZ09,QianTZ16,LiZZZ16,DangOS24}, but clearly \omm and \lotz are the most prominent ones. 

\begin{definition}[\cite{GielL10,LaumannsTZ04}]
Let $n\in \N$ be the problem size. The \omm and \lotz benchmarks are the functions $\{0,1\}^n \rightarrow \R^2$ defined by
\begin{align*}
\omm(x)&=\left(\sum_{i=1}^n (1-x_i), \sum_{i=1}^n x_i\right) \\
\lotz(x)&=\left(\sum_{i=1}^n \prod_{j=1}^ix_j, \sum_{i=1}^n \prod_{j=i}^{n} (1-x_j)\right)
\end{align*}
for all $x=(x_1,\dots,x_n) \in \{0,1\}^n$.
\end{definition}

%\begin{definition}[\cite{LaumannsTZ04}]\merk{WJ: I am not sure about the comment here. I guess you mean applying the same changes in the definition above to this, right?}
%Let $n\in \N$ be the problem size. The \lotz benchmark is the function $:\{0,1\}^n \rightarrow \R^2$ \change{defined} by
%\begin{align*}
%\lotz(x)=\left(\sum_{i=1}^n \prod_{j=1}^ix_j, \sum_{i=1}^n \prod_{j=i}^{n} (1-x_j)\right)
%\end{align*}
%for any $x=(x_1,\dots,x_n) \in \{0,1\}^n$.
%\end{definition}

\subsection{Runtime}

We now prove runtime guarantees for the \SMS on the \omm and \lotz benchmarks. 
Note that the Pareto front, and, more generally, the size of any set of incomparable solutions for \omm and \lotz have sizes at most $n+1$. Hence we can use Lemma~\ref{lem:sur} with $\overline{M} = n+1$. 

%\subsection{Runtime}
%We now establish the runtime of the \SMS on the \omm and \lotz. 
%Note that the Pareto front sizes for \omm and \lotz are $n+1$. Hence, with the similar proof of Lemma~\ref{lem:sur}, we know that for $\mu\ge n+1$ and for a specific function value $u$ reached by $F_1$, one individual in the non-empty $\cup_{i\ge 2}F_i$ or one individual $x$ in $F_1$ with $\Delta(x,F_1)=0$ will be removed and $u$ will remain covered in the next generation. We have the following formal description.
%\begin{lemma}
%Consider using \SMS with $\mu\ge n+1$ to optimize the \lotz or \omm problem. For any $x\in F_1$ for the current generation $t$, we have that there is $y$ with $f(y)=f(x)$ in $P_{t+1}$. Specifically, any Pareto front point will be maintained in all future populations once it is reached.
%\label{lem:surB}
%\end{lemma}

Theorem~\ref{thm:omm} below gives an upper bound for the runtime of \SMS on \omm. This bound (in terms of fitness evaluations) is of the same asymptotic order as the runtime guarantee $\frac{2e^2}{e-1}\mu n(\ln n+1)$ for the \NSGA \cite[Theorem~2]{ZhengD23aij}. Due to the more complex population dynamics, the leading constant in the result in~\cite{ZhengD23aij} is inferior (58\% larger). More critically, the result in~\cite{ZhengD23aij} was proven for population sizes $\mu\ge 4(n+1)$ only, and it was disproven for $\mu=n+1$~\cite[Theorem~12]{ZhengD23aij}, whereas our result here holds for all $\mu \ge n+1$. 

Knowing from Lemma~\ref{lem:sur} that we cannot lose Pareto points, the proof of our result consists of adding the waiting times for finding a new Pareto point from an already existing neighboring one.

\begin{theorem}
Consider using the \SMS with $\mu\ge n+1$ to optimize the \omm problem with problem size~$n$. Then after at most $2e\mu n(\ln n+1)$ iterations (or $\mu + 2e\mu n(\ln n+1)$ fitness evaluations) in expectation, the full Pareto front is covered.
\label{thm:omm}
\end{theorem}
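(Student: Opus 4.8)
The proof follows the standard recipe for covering a "chain-like" Pareto front via waiting times, made rigorous by the survival guarantee of Lemma~\ref{lem:sur}. First I would record the structure of \omm: its Pareto set is all of $\{0,1\}^n$, and its Pareto front is the set of vectors $\{(n-i, i) \mid i \in [0..n]\}$, which has size $n+1$. Since any set of incomparable solutions for \omm also has size at most $n+1$ (two solutions with the same number of ones are equal in objective value, and otherwise comparable), the hypothesis $\mu \ge n+1$ makes $\overline M = n+1$ an admissible choice in Lemma~\ref{lem:sur}. The reference point can be taken as, say, $r = (-1,-1)$, so that $\HV_r(\{x\}) > 0$ for every $x$; with this, Lemma~\ref{lem:sur} tells us that once an objective value on the Pareto front is present in some $R_t$, it is present in every later population.

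Next I would set up the potential. Let $C_t$ denote the set of Pareto front values covered by $P_t$, i.e.\ $C_t = \{(n-i,i) : \exists x \in P_t,\ |x|_1 = i\}$; by the above this is non-decreasing. The goal is to bound the expected time until $|C_t| = n+1$. The key combinatorial observation is that as long as $C_t$ is a proper subset of the front, there is an uncovered value $(n-i,i)$ that is a \emph{neighbor} of a covered one, meaning some $x \in P_t$ has $|x|_1 = i\pm 1$ (the covered values form a nonempty proper subset of the set of integers $[0..n]$, which is an interval, so its boundary touches an uncovered integer). To create this new value, it suffices to select that particular parent $x$ (probability $1/\mu$) and flip exactly one specific bit — a zero to one, or a one to zero — keeping all other bits unchanged, which happens with probability $\tfrac1n(1-\tfrac1n)^{n-1} \ge \tfrac{1}{en}$. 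Hence in each iteration where the front is not yet covered, the probability of increasing $|C_t|$ is at least $\tfrac{1}{e\mu n}$, so the expected time for one such increase is at most $e\mu n$. Since at most $n+1$ values need to be added — but actually the first one is free, as the initial population already covers at least one front value (every bit-string is Pareto-optimal) — we need at most $n$ increases. Wait: to match the claimed bound $2e\mu n(\ln n + 1)$ I should be sharper.

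For the sharper bound I would not just say "each step has probability $\ge \tfrac1{e\mu n}$" but exploit that \emph{many} neighbors may be available simultaneously, or rather, use a classic coupon-collector refinement: at any time, if the covered values form an interval $[\ell..r] \subsetneq [0..n]$, there may be up to two productive boundary moves, and more importantly, one can argue as in the \om/\omm analyses that the number of "target" improvements remaining controls the rate. A cleaner route giving exactly the stated constant: observe that the covered set is always an interval $[\ell_t .. r_t]$ (once it contains one point, single-bit flips from its endpoints extend it by one on either side, and this is the only way the boundary moves forward; so it stays an interval). If $[\ell..r]$ is the current interval with $r - \ell + 1 = j < n+1$ covered values, then there are $n+1 - j$ still to discover, and a specific next one (say $r+1$ if $r<n$, or $\ell-1$ if $\ell > 0$) is produced with probability $\ge \tfrac{1}{e\mu n}$; moreover, \emph{both} extensions are available when $0 < \ell$ and $r < n$. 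I would bound the expected time to go from $j$ covered values to $j+1$ by $\tfrac{e\mu n}{(\text{number of available boundary extensions})} \le \tfrac{e\mu n}{1}$, but then sum a harmonic-type series by observing that when $j$ values are covered and they sit inside $[0..n]$, in the worst case exactly one extension direction is productive, giving total $\sum e\mu n$ — which is $e\mu n \cdot n$, off by a $\ln n$ versus $n$ factor. So in fact the $2e\mu n(\ln n+1)$ bound must come from a different, more favorable, counting: the relevant speed-up is that generating the value with $i$ ones from a parent with $i-1$ ones has probability $\tfrac{n-i+1}{n}(1-\tfrac1n)^{n-1}/\mu$ (any of the $n-i+1$ zero-bits may flip), which is large when $i$ is small; symmetrically from the zeros side. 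I would therefore split the front into the "lower half" $i \le n/2$, reached by adding ones starting from the all-zeros value, and the "upper half", reached by removing ones from all-ones, and in each half sum waiting times $\sum_{i} \tfrac{e\mu n}{n-i+1}$ and $\sum_i \tfrac{e\mu n}{i}$ respectively, each a harmonic sum bounded by $e\mu n(\ln n + 1)$; adding the two halves gives $2e\mu n(\ln n+1)$.

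\textbf{Main obstacle.} The delicate point is establishing that the all-zeros and all-ones values are reached quickly enough (so that both "ends" of the front are anchored), since the favorable harmonic sums above implicitly assume we extend \emph{inward from the extremes}, not outward from wherever the initial population happens to land. One must either argue that the extreme values $0^n$ and $1^n$ are generated within $O(\mu n)$ expected iterations each (plausible, since decreasing $|x|_1$ by one from a current minimum-ones individual has probability $\ge \tfrac{|x|_1}{n}(1-\tfrac1n)^{n-1}/\mu$, again a harmonic sum), or — more robustly — use a direct argument: for each target value $(n-i,i)$ separately, bound the expected time until it is covered by $2e\mu n(\ln n+1)$ by tracking the closest-covered value and noting the per-step success probability scales inversely with the distance, then take the overall maximum rather than the sum since once covered a value stays covered (Lemma~\ref{lem:sur}). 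This "maximum over targets of a harmonic-type hitting time" framing, which is exactly how the \NSGA result in~\cite{ZhengD23aij} proceeds, is what I expect the author to use, and it sidesteps the need to control the interval dynamics precisely. The routine computations — verifying $\overline M = n+1$, the single-bit-flip probability bound, and the harmonic sum $\sum_{i=1}^n 1/i \le \ln n + 1$ — are straightforward and I would not belabor them.
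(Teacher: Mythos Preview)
Your ingredients are right --- Lemma~\ref{lem:sur} applies with $\overline{M}=n+1$, every point is Pareto-optimal so the initial population already covers some value, and the harmonic speed-up comes from the many bits available to flip --- but the organization is more convoluted than the paper's, and your ``main obstacle'' is self-inflicted. The paper does not anchor at the extremes and sweep inward. It takes whatever value $(i_0,n-i_0)$ is covered initially and spreads \emph{outward}: going from $(j,n-j)$ to $(j+1,n-j-1)$ costs at most $e\mu n/(n-j)$ expected iterations (flip any of the $n-j$ one-bits), and going from $(j,n-j)$ to $(j-1,n-j+1)$ costs at most $e\mu n/j$ (flip any of the $j$ zero-bits). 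Summing the upward steps $j=i_0,\dots,n-1$ and the downward steps $j=i_0,\dots,1$ gives two partial harmonic sums, together at most $2\sum_{j=1}^{n-1} e\mu n/j \le 2e\mu n(\ln n+1)$. That is the entire argument.

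The observation you are circling but not landing on is that this outward sweep to the extremes \emph{is} the full coverage: every intermediate value is visited along the way and, by Lemma~\ref{lem:sur}, stays covered. Your own parenthetical --- ``decreasing $|x|_1$ by one from a current minimum-ones individual \dots\ again a harmonic sum'' --- is exactly the downward half of the paper's argument; once both extremes are reached this way there is nothing left to do, so the subsequent inward sweep you propose is redundant. Your alternative ``bound each target's hitting time and take the maximum'' would also need more care than you indicate, since $E[\max_i T_i]$ is not bounded by $\max_i E[T_i]$; the paper avoids this by just summing sequential waiting times via linearity of expectation. (Two minor slips: solutions with \emph{different} numbers of ones are incomparable, not comparable --- the bound $\overline M\le n+1$ holds because incomparable solutions must have distinct objective values, of which there are only $n+1$; and the covered set need not be an interval initially, though this is immaterial once you argue from a single starting value.)
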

\begin{proof}
Since each search point in $\{0,1\}^n$ is Pareto optimal, from Lemma~\ref{lem:sur}, we know that any Pareto front point covered by the population $P_t$ will be maintained in all future generations. Assume that for some $i\in[0..n]$, the population $P_t$ contains a solution with objective value $(i,n-i)$. Then the probability to reach a solution with value $(i+1,n-i-1)$ is at least
\begin{align*}
\frac{1}{\mu} \frac{n-i}{n} \left(1-\frac1n\right)^{n-1} \ge \frac{n-i}{e\mu n},
\end{align*}
and the probability to reach a solution with function value $(i-1,n-i+1)$ is at least
\begin{align*}
\frac{1}{\mu} \frac{i}{n} \left(1-\frac1n\right)^{n-1} \ge \frac{i}{e\mu n}.
\end{align*}
Thus the expected numbers of iterations to cover $(i+1,n-i-1)$ and $(i-1,n-i+1)$ are at most $\frac{e\mu n}{n-i}$ and at most $\frac{e\mu n}{i}$, respectively. Hence, the expected number of iterations to cover the full Pareto front is
\begin{equation*}
\Bigg(\sum_{j=i}^{1} \frac{e\mu n}{n-j}\Bigg) +\left(\sum_{j=i}^{n-1} \frac{e\mu n}{j}  \right)
\le 2\sum_{j=1}^{n-1} \frac{e\mu n}{j} \le 2e\mu n(\ln n+1).
\qedhere
\end{equation*}
\end{proof}

We now turn to the \lotz benchmark. Our runtime guarantee, see Theorem~\ref{thm:lotz} below, is again of the same asymptotic order of magnitude as the guarantee $\frac{2e^2}{e-1}\mu n^2$ proven for the \NSGA \cite[Theorem~8]{ZhengD23aij}, and again, the guarantee for the \NSGA has a weaker leading constant and requires a larger population size of $\mu \ge 4(n+1)$. In the proof, 
%the process is divided into two stages. The first stage is to cover a Pareto front point for the first time and the second stage is to cover the full Pareto front starting from one covered Pareto front point. For the first stage, 
we first regard the time to reach the individual~$1^n$, and then consider the times to generate the individuals $1^{i}0^{n-i},i=n-1,\dots,0$ one after the other. In both stages, we heavily rely on Lemma~\ref{lem:sur} asserting that important progress is not lost.

\begin{theorem}
Consider using the \SMS with $\mu\ge n+1$ to optimize the \lotz problem with problem size $n$. Then after at most $2e\mu n^2$ iterations (or $\mu + 2e\mu n^2$ fitness evaluations) in expectation, the full Pareto front is covered.
\label{thm:lotz}
\end{theorem}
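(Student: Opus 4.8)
The plan is to follow the waiting-time strategy of the proof of Theorem~\ref{thm:omm}, adapted to the two-phase structure of the \lotz landscape. First I would record the relevant structural facts: every Pareto optimum of \lotz has the form $1^i0^{n-i}$ with objective value $(i,n-i)$, and no other bit string attains this value; moreover, any set of pairwise incomparable solutions has size at most $n+1$, since two solutions with the same first objective value are always comparable, so such a set injects into $[0..n]$. Consequently $\overline M\le n+1\le\mu$, so Lemma~\ref{lem:sur} applies, and once the population reaches a Pareto-optimal objective value $(i,n-i)$, the unique solution $1^i0^{n-i}$ stays in every later population.

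The first phase bounds the time until $1^n$ (objective value $(n,0)$) appears. Let $L_t=\max\{f_1(x)\mid x\in P_t\}$ be the largest first objective value currently present; by Lemma~\ref{lem:sur} it never decreases. If $L_t=i<n$, a parent $x$ with $f_1(x)=i$ is selected with probability $1/\mu$, and flipping exactly bit $i+1$ (a zero whose flip does not disturb bits $1,\dots,i$) yields an offspring with first objective value at least $i+1$; this event has probability at least $\frac1{\mu}\cdot\frac1n\cdot(1-1/n)^{n-1}\ge\frac1{e\mu n}$. Hence $L_t$ increases within $e\mu n$ iterations in expectation, and since it must grow to $n$ from a value at least $0$, the first phase lasts at most $e\mu n^2$ iterations in expectation; afterwards $1^n$, being Pareto-optimal, is retained forever.

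The second phase generates $1^{n-1}0,\,1^{n-2}0^2,\dots,0^n$ one after the other. Inductively, once $1^i0^{n-i}$ is in the population, selecting it as parent and flipping exactly its $i$-th bit produces $1^{i-1}0^{n-i+1}$ with probability at least $\frac1{\mu}\cdot\frac1n\cdot(1-1/n)^{n-1}\ge\frac1{e\mu n}$, so this new extremal Pareto point is covered in at most $e\mu n$ iterations in expectation and, being Pareto-optimal, kept forever by Lemma~\ref{lem:sur}. Summing these $n$ waiting times by linearity of expectation yields at most $e\mu n^2$ iterations for the second phase, and adding the two phases gives the claimed $2e\mu n^2$ iterations (the additional $\mu$ fitness evaluations account for the evaluation of the initial population).

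I do not expect a genuine obstacle. The only points that need care are verifying $\overline M\le n+1$ so that Lemma~\ref{lem:sur} is applicable at the stated population size, and noting that a Pareto-optimal objective value of \lotz is attained by a unique bit string, so that retaining the value really means retaining the exact neighbor we intend to mutate; both follow directly from the definition of \lotz.
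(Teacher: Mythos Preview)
Your proposal is correct and follows essentially the same two-phase approach as the paper's proof: first bound the time until $1^n$ appears by tracking the maximum $f_1$-value (which never decreases by Lemma~\ref{lem:sur} and increases with probability at least $1/(e\mu n)$ per iteration), then spread across the front from $1^n$ to $0^n$ one bit-flip at a time. Your explicit remarks that $\overline{M}\le n+1$ and that each Pareto-optimal objective value is attained by a unique bit string are useful clarifications the paper handles more tersely; one small presentational point is that after generating the improving offspring in Phase~1 you should explicitly invoke Lemma~\ref{lem:sur} on this offspring (as an element of $R_t$) to conclude $L_{t+1}>L_t$, rather than only using it for non-decrease.
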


\begin{proof}
Denote by $f$ the \lotz function. We first consider the time to reach $1^n$. Let $X_t$ denote the maximum $f_1$-value of an individual in $P_t$. Assume that at some $t$, we have $X_t < n$. Let $x \in P_t$ be an individual with $f_1(x) = X_t$. Then with probability at least $\frac 1 \mu (1-\frac 1n)^{n-1} \frac 1n$, this $x$ is chosen as parent and mutated to an individual $y$ with $f_1(y) > X_t$. By Lemma~\ref{lem:sur}, $P_{t+1}$ contains an individual $z$ with $z \succeq y$. In particular, we have $f_1(z) \ge f_1(y) > X_t$ and hence $X_{t+1} > X_t$. 

We have just seen that the probability to increase $X_t$ by at least one is at least $\frac 1 \mu (1-\frac 1n)^{n-1} \frac 1n \ge \frac{1}{e\mu n}$. Consequently, the expected waiting time for this event is at most $e\mu n$. After at most $n$ such increases, hence after an expected time of at most $e\mu n^2$, we have reached the state $X_t = n$, that is, $1^n \in P_t$.
%. Let $S=\arg\max_{x\in P_t} f_1(x)$ and $S'=\arg\max_{x\in R_t} f_1(x)$. Then there is at least one individual $x\in S'$ such that $x\in F_1$ (If not, there is $y\notin S'$, such that $x \prec y$ for all $x\in S'$. That is, $f_1(y) \ge f_1(x), f_2(y)\ge f_2(x),$ and at least one of the inequality is strict. Since $y \notin S$, we know that $f_1(y) < f_1(x)$, which contradicts $f_1(y) \ge f_1(x)$). From Lemma~\ref{lem:surB}, we know that $f(x)$ (also $f_1(x)$) will be covered by $P_{t+1}$. Hence, to reach $1^n$, we only need to reach such $S'$ with $f_1(S')=n$. We calculate
%\begin{align*}
%\Pr[f_1(S')>f_1(S) \mid f_1(S)] \ge \frac1\mu \left(1-\frac1n\right)^{n-1} \frac1n\ge\frac{1}{e\mu n}
%\end{align*}
%where we consider choosing one individual of $S$ and flipping the critical bit value from $0$ to $1$. Since $f_1(S)$ will not decrease, we know that the expected iterations to witness such $S'$ is at most $e\mu n$. Since there are at most $n$ such $S'$ with $f_1(S') > f_1(S)$ to reach $1^n$, we know that $1^n$ will be reached in at most $e\mu n^2$ in expectation.

Once $1^n$ is in the population, we consider the time to reach the Pareto optima $1^{n-1}0,\dots,10^{n-1},0^n$ one by one. Assume that $1^i0^{n-i}$ with $i\in[1..n]$ is in the population. Then to reach $1^{i-1}0^{n-i+1}$, we only need to choose $1^i0^{n-i}$ as parent and flip the $i$-th bit from $1$ to $0$, with the other bits unchanged. This happens with probability
\begin{align*}
\frac{1}{\mu} \left(1-\frac1n\right)^{n-1} \frac1n\ge \frac{1}{e\mu n}.
\end{align*}
From Lemma~\ref{lem:sur}, we know that any Pareto front point will be maintained in all future generations once reached. Hence, $1^i0^{n-i} \in P_{\tau}$ for all $\tau \ge t$. Thus, the expected number of iterations to reach $1^{i-1}0^{n-i+1}$ is at most $e\mu n$. Since we have at most $n$ such Pareto optima to reach, we know that the expected number of iterations to cover all remaining Pareto front points is at most $e\mu n^2$.

In summary, the full Pareto front will be covered in at most $2e\mu n^2$ iterations in expectation.
\end{proof}

%Note that the runtimes of the GSEMO on \omm and \lotz are $O(n^2\ln n)$ and $O(n^3)$, respectively \cite{GielL10,LaumannsTZ04}. Hence, they agree with our bounds for the \SMS when using the most interesting population size $\mu = \Theta(n)$.

\section{Conclusion}\label{sec:con}

Motivated by the observed difficulty of the \NSGA for many objectives, this paper resorted to the \SMS, a variant of the steady-state \NSGA, and proved that, different from the \NSGA, it efficiently solves the \mojzj problem. Noting that the \SMS also employs non-dominated sorting, but replaces the crowding distance with the hypervolume, this result together with the ones of \cite{WiethegerD23,OprisDNS24} for the \NSGAthree supports our conclusion that non-dominated sorting is a good building block for MOEAs, but the crowding distance has deficiencies for more than two objectives.

We also showed that the stochastic population update proposed in~\cite{BianZLQ23} for the bi-objective \SMS becomes less effective for many objectives. All these results in a very rigorous manner support the general knowledge that multi-objective optimization becomes increasingly harder with growing numbers of objectives. 

On the positive side, we showed that the advantages of heavy-tailed mutation, previously observed in single- and bi-objective optimization, remain unchanged when increasing the number of objectives.

Given that our first result shows advantages of the \SMS and there is only one previous work performing rigorous runtime analyses for this algorithm, we extended our knowledge in this direction by proving competitive runtime guarantees for this algorithm on the two most prominent bi-objective benchmarks. They show that the \SMS performs here at least at well as the \NSGA.

\subsection*{Subsequent Work}

In~\cite{WiethegerD24}, the runtime of several MOEAs on many-objective problems was analyzed. Concerning the runtime of the classic \SMS on the \mojzj benchmark, our guarantee of $O(\mu M n^k)$ was improved to $O(\mu m n^k)$. To prove this result, our general technical lemma for the survival of individuals in the population (Lemma~\ref{lem:sur}) was used. The improvement then stemmed from detecting a faster way how all the MOEAs regarded in that work discover the Pareto front. The work~\cite{WiethegerD24} did not regard the \SMS with stochastic population update and with heavy-tailed mutation. Since both shorten the time to find an additional extremal block (see the proof of Lemma~\ref{lem:full}), it is clear that analogous improvements are possible for the \SMS with stochastic population update and heavy-tailed mutation, giving the bounds $O(\mu m n^k \min\{1, \mu / 2^k\})$ and $O(\mu m k^{\beta-0.5-k} n^k)$ respectively. 

\subsection*{Acknowledgments}
This work was supported by National Natural Science Foundation of China (Grant No. 62306086), Science, Technology and Innovation Commission of Shenzhen Municipality (Grant No. GXWD20220818191018001), Guangdong Basic and Applied Basic Research Foundation (Grant No. 2019A1515110177).
This research benefited from the support of the FMJH Program Gaspard Monge for optimization and operations research and their interactions with data science.

\newcommand{\etalchar}[1]{$^{#1}$}

}%end sloppy
\end{document}